\newcounter{corrfn}\setcounter{corrfn}{0}
\def\corrauthor{%
  \ifnum\value{corrfn}=0%
    \thanks{Corresponding Author}%
    \setcounter{corrfn}{\value{footnote}}%
  \else%
    \footnotemark[\value{corrfn}]%
  \fi%
}
\title{MRBTP: Efficient Multi-Robot Behavior Tree Planning and Collaboration}
\author{
    Yishuai Cai\equalcontrib,
    Xinglin Chen\equalcontrib,
    Zhongxuan Cai\corrauthor,
    Yunxin Mao,\\
    Minglong Li\corrauthor ,
    Wenjing Yang,
    Ji Wang
}
\begin{document}

\maketitle

\begin{abstract}


Multi-robot task planning and collaboration are critical challenges in robotics. While Behavior Trees (BTs) have been established as a popular control architecture and are plannable for a single robot, the development of effective multi-robot BT planning algorithms remains challenging due to the complexity of coordinating diverse action spaces. We propose the Multi-Robot Behavior Tree Planning (MRBTP) algorithm, with theoretical guarantees of both soundness and completeness. MRBTP features cross-tree expansion to coordinate heterogeneous actions across different BTs to achieve the team's goal. For homogeneous actions, we retain backup structures among BTs to ensure robustness and prevent redundant execution through intention sharing. While MRBTP is capable of generating BTs for both homogeneous and heterogeneous robot teams, its efficiency can be further improved. We then propose an optional plugin for MRBTP when Large Language Models (LLMs) are available to reason goal-related actions for each robot. These relevant actions can be pre-planned to form long-horizon subtrees, significantly enhancing the planning speed and collaboration efficiency of MRBTP. We evaluate our algorithm in warehouse management and everyday service scenarios. Results demonstrate MRBTP's robustness and execution efficiency under varying settings, as well as the ability of the pre-trained LLM to generate effective task-specific subtrees for MRBTP.

\end{abstract}

%
\begin{links}
    \link{Code}{https://github.com/DIDS-EI/MRBTP}
\end{links}

\section{Introduction}
Multi-robot systems (MRS) that involve robots with diverse capabilities offer the potential for improved performance and fault tolerance compared to single-robot solutions \cite{colledanchise2016advantages}. Developing an autonomous MRS requires an efficient and robust control architecture, along with methods to adapt them for specific tasks. Behavior Trees (BTs) have emerged as a popular control architecture due to their modularity, interpretability, reactivity, and robustness, making them well-suited for both single- and multi-robot systems \cite{heppner2024behavior,heppnerl2023distributed,neupane2019learning,colledanchise2016advantages}. As the potential of BTs gains more attention, various methods for automatically generating BTs have been proposed, including evolutionary computing \cite{neupane2019learning,colledanchise2019learning}, reinforcement learning \cite{banerjee2018autonomous,pereira2015framework}, and BT synthesis \cite{tadewos2022specificationguided,neupane2023designing}. Among these methods, BT planning \cite{chen2024integrating,chen2024efficient,cai2021bt,colledanchise2019blended} has advantages in leveraging interpretable action models and producing reliable BTs to achieve goals, which make it a promising approach to generate BTs for autonomous robot systems.

\begin{figure}[t]
	\centering
	\includegraphics[width=0.47\textwidth]{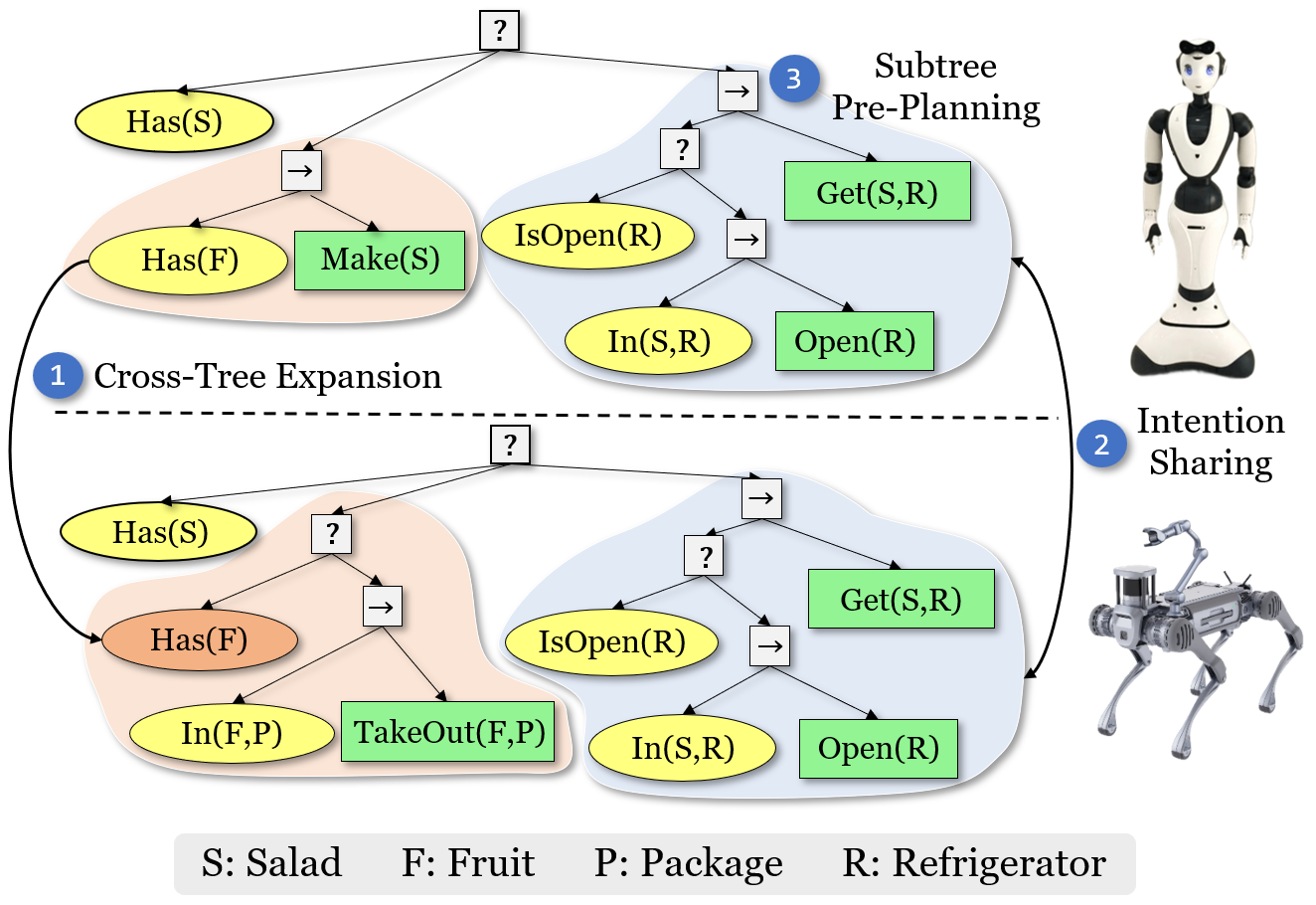}
	\caption{An example of two BTs planned by MRBTP: (1) Cross-tree expansion, (2) Intention sharing, (3) Optional plugin: subtree pre-planning.}
	\label{fig:introduction}
\end{figure}

However, current BT planning focuses primarily on a single robot, and the development of effective multi-robot BT planning algorithms remains challenging. The challenges mainly arise from two aspects:
\begin{itemize}
\item For heterogeneous actions, how to coordinate them across different BTs to accomplish the team's goal.
\item For homogeneous actions, how to use them to improve fault tolerance without redundant execution.
\end{itemize}

In this paper, we propose Multi-Robot Behavior Tree Planning (MRBTP), the first sound and complete algorithm for generating reliable and robust BTs for MRS. MRBTP addresses the above challenges as follows:
\begin{itemize}
\item We employ cross-tree expansion, where the condition expanded in one BT will be further expanded by all BTs. This means one robot may take an action to satisfy another's precondition, enabling multi-tree collaboration.
\item We allow backup structures to be expanded by robots with homogeneous actions to ensure fault tolerance, while using intention sharing to avoid redundant execution. During execution, each robot broadcasts its current action so that others can predict its effects and avoid performing actions with the same effects.
\end{itemize}

As shown in Figure \ref{fig:introduction}, in an everyday service scenario, the team's goal is to prepare a \constant{Salad}. The humanoid robot can perform the action \constant{Make(Fruit,Salad)}, if the precondition \constant{Has(Fruit)} is satisfied. Although it cannot do \constant{Unload(Fruit,Package)}, it can push \constant{Has} \constant{(Fruit)} to the planning queue, and another quadruped robot will expand this action through cross-tree expansion, enabling multi-tree coordination. 

In another case where \constant{In(Salad,Refrigerator)} is satisfied, and both robots can do \constant{Open(Refrigerator)} and \constant{Get(Salad)}. In this case, MRBTP will expand the same structure in both trees to ensure failure tolerance. If both robots are available, the humanoid robot with higher priority will do \constant{Open}\constant{(Refrigerator)} and share its intention. The quadruped robot will then assume \constant{IsOpen}\constant{(Refrigerator)} is true and walk to the \constant{Refrigerator}, waiting to do \constant{Get(Salad,} \constant{Refrigerator)} as long as the \constant{Refrigerator} is truly open. The intention sharing ensures parallelization and improves execution efficiency of the robot team.

Although MRBTP is a domain-independent algorithm, it is possible to enhance the planning and execution efficiency if Large Language Models (LLMs) are available for domain-dependent reasoning. Therefore, we further propose an optional plugin named subtree pre-planning. Assuming the LLM can reason some useful actions for each robot according to its capabilities, we can use these actions to plan useful subtree structures quickly before the long-horizon planning process. These subtrees can not only increase the planning speed, but can also decrease the communication expenses during execution. Experiments in warehouse management and everyday service scenarios demonstrate MRBTP's robustness and execution efficiency under varying settings, as well as the ability of pre-trained LLMs to generate effective task-specific subtrees for MRBTP.


\section{Background}
\paragraph{Behavior Tree.} 
A BT is a directed rooted tree where the execution nodes interact with the environment and the control flow nodes handle the triggering logic of their children\cite{colledanchise2018behavior}. 
At each time step, the BT initiates a tick that goes through control nodes, determining the action that the robot will execute according to the environmental state. This paper mainly focuses on four typical BT nodes:

\begin{itemize}
\item \textbf{Condition} \includegraphics[height=0.75em]{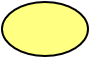}: 
An execution node that checks whether the environment state satisfies the specified condition, returning either {\ttfamily success} or {\ttfamily failure} accordingly.

\item \textbf{Action} \includegraphics[height=0.75em]{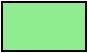}: An execution node that controls the robot to perform an action, returning {\ttfamily success}, {\ttfamily failure}, {\ttfamily running} depending upon the outcome of execution.

\item \textbf{Sequence} \includegraphics[height=0.75em]{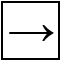}: A control flow node that only returns {\ttfamily success} if all its children succeed. Otherwise, it ticks its children from left to right, and the first child to return {\ttfamily failure} or {\ttfamily running} will determine its return status. 

\item \textbf{Fallback} \includegraphics[height=0.75em]{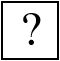}:  A control flow node with logic opposite to the sequence node. It returns {\ttfamily failure} only if all of its children fail. If not, the first occurrence of {\ttfamily success} or {\ttfamily running} during ticking becomes its return status.
\end{itemize}

\paragraph{BT Planning.}
In BT planning for a single robot \cite{cai2021bt}, we represent a BT as a three-tuple $\mathcal{T} = <f, r, \Delta t>$. $f:2^{n}\rightarrow 2^{n}$ is its effect on the environment state, $\Delta t$ is the time step, and $r:2^{n}\mapsto \{ $\constant{S}, \constant{R}, \constant{F}\} partitions states into three regions, where $\mathcal{T}$ returns success, running, failure, respectively.

Then the BT planning problem can be described as: \(<\mathcal{S},\mathcal{L},\mathcal{A},\mathcal{M}, s_0,g>\), where \( \mathcal{S} \) is the finite set of environment states, $\mathcal{L}$ is the finite set of literals that form states, \( \mathcal{A} \) is the finite set of actions, $\mathcal{M}$ is the action model, $s_0$  is the initial state, $g$ is the goal condition.

A condition $c$ in BT is usually a subset of a state $s$. If $c\subseteq s$, it is said condition $c$ holds in that state $s$. The state transition affected by action $a\in \mathcal{A}$ can be defined as a triplet \( \mathcal{M}(a)=<pre(a),add(a),del(a)> \), comprising the precondition, add effects, and delete effects of the action. If $a$ is finished after $k$ time step, the subsequent state $s_{t'}$ will be:
\begin{equation}\label{eqn:s_f}
	s_{t'}=f_a(s_t)=s_t\cup add(a)\setminus del(a), t'=t + k
\end{equation}

\section{Problem Formulation}
We first extend the BT representation from a single robot to a multi-robot system.

\begin{definition}[Multi-BT System]
A $n$-robot BT system is a four-tuple $\left<\Phi, f_\Phi, r_\Phi, \Delta t_\Phi\right>$, where $\Phi = \left\{ \mathcal{T}_i \right\}_{i=1}^n$ is the set of BTs, $f_\Phi: \mathcal{S} \mapsto \mathcal{S}$ is the team state transition function, $\Delta t_\Phi$ is the team time step, $r_\Phi: \mathcal{S} \mapsto \{$ \constant{S}, \constant{R}, \constant{F} $\}$ is the team region partition.
\end{definition}
Due to variability in hardware performance, we allow each robot's BT to have a different response frequency, with $\Delta t_\Phi$ representing the common minimum response interval. The state transition can be calculated as follows:
\begin{align}
    s_{t+\Delta t_\Phi} = f_\Phi(s_t) = s_t \cup \bigcup_{i=1}^n \left( add(a_i) \setminus del(a_i) \right)
\end{align}
where $a_i$ is the action of robot $i$ in time $t$. If robot $i$ does not have an action or its action is running, we let $add(a_i) = del(a_i) = \emptyset$. 

The team region partition can be calculated as follows:
\begin{equation}
    r_\Phi(s) = 
\begin{cases}
    \text{\constant{R}} & \text{if } \exists i, r_i(s) = \text{\constant{R}} \\
    \text{\constant{S}} & \text{if } \forall i, r_i(s) \neq \text{\constant{R}} \text{ and } \exists i, r_i(s) = \text{\constant{S}} \\
    \text{\constant{F}} & \text{if } \forall i, r_i(s) = \text{\constant{F}}
\end{cases}
\end{equation}
The status of $\Phi$ is \constant{R} if any BT is still running, \constant{S} if some BT returns success and no one is running, and \constant{F} if all BT fails.
\begin{definition}[Finite Time Successful]
$\Phi$ is finite time successful (FTS) from region of attraction (ROA) $R$ to condition $c$, if $\forall s_0 \in R$  there is 
 a finite time $\tau$ such that for any $t<\tau$, $r_\Phi(s_t)=$ \constant{R}, and for any $t\geq\tau, r_\Phi(s_t)=$ \constant{S}, $c\in s_t$.
\end{definition}
With definitions above, the multi-robot BT planning problem can finally be defined.

\begin{problem}[Multi-Robot BT Planning]
The problem is a tuple \(\left<\mathcal{S},\mathcal{L},\{\mathcal{A}_i\}_{i=1}^n,\mathcal{M}, s_0,g\right>\), where \( \mathcal{S} \) is the finite set of environment states, $\mathcal{L}$ is the finite set literals that form states and conditions, $\mathcal{A}_i$ is the finite action set of robot $i$, $\mathcal{M}$ is the action model, $s_0$  is the initial state, $g$ is the goal condition. A solution to this problem is a BT set $\Phi = \{\mathcal{T}_i\}_{i=1}^n$ built with $\{\mathcal{A}_i\}_{i=1}^n$, such that $\Phi$ is FTS from $R\ni s_0$ to $g$.

\end{problem}

\begin{algorithm}[t]
\caption{One-step cross-tree expansion}
\label{alg:one_step}
\begin{algorithmic}[1]
\State \textbf{function} \constant{ExpandOneRobot(}$\mathcal{T},\mathcal{A}, c$\constant{)}
\begin{ALC@g}
\StateComment{$\mathcal{T}_{new} \leftarrow c$}{newly expanded subtree}
\StateComment{$\mathcal{C}_{new} \gets \emptyset$}{newly expanded conditions}
\FOR{\textbf{each} action $a \in \mathcal{A}$}
    \IF{$c \cap (pre(a) \cup add(a) \setminus del(a)) \neq \emptyset$ and $c \setminus del(a) = c$}
    \STATE $c_a \leftarrow pre(a) \cup c \setminus add(a)$
    \STATE $\mathcal{T}_a \leftarrow Sequence(c_a, a)$
    \STATE $\mathcal{T}_{new} \leftarrow Fallback(\mathcal{T}_{new}, \mathcal{T}_a)$
    \State{$\mathcal{C}_{new} \gets \mathcal{C}_{new} \cup \{c_a\}$}
    \ENDIF
\ENDFOR
\IF{$\mathcal{C}_{new}\neq \emptyset$}
    \IF{\constant{ConditionInTree}($c$, $\mathcal{T}$)}
        \StateComment{Replace $c$ with $\mathcal{T}_{new}$ in $\mathcal{T}$}{in-tree expand} \label{InsideExpand2}
    \ELSIF{$\mathcal{T}_{new} \neq c$} 
        \StateComment{$\mathcal{T} \gets Fallback(\mathcal{T},\mathcal{T}_{new}) $}{cross-tree expand}
    \ENDIF
\ENDIF
\end{ALC@g}
\RETURN $\mathcal{T}, \mathcal{C}_{new}$
\end{algorithmic}
\end{algorithm}

\section{Methods}

We first detail MRBTP, analyzing its soundness, completeness, and computational complexity. Then, we demonstrate how intention sharing functions among BTs during execution. Finally, we introduce the optional plugin, subtree pre-planning, to further enhance efficiency.

\subsection{Multi-Robot Behavior Tree Planning}

\subsubsection{One-Step Cross-Tree Expansion}
Algorithm \ref{alg:one_step} gives the pseudocode of one-step cross-tree expansion for one robot. Given its current BT $\mathcal{T}$, action space $\mathcal{A}$ and the condition to expand $c$ (line 1), the function returns an expanded BT $\mathcal{T}$ along with the newly expanded condition set $\mathcal{C}_{new}$ (line 15). Similar to one-step expansion for a single robot \cite{cai2021bt}, the expansion begins with $\mathcal{T}_{new}=c$ and $\mathcal{C}_{new}=\emptyset$ (line 2-3). Then we go through the action space $\mathcal{A}$ to find all premise actions that can lead to $c$ (line 4-5). For each premise action $a$, we calculate its corresponding precondition $c_a$ (line 6), form a sequence structure $\mathcal{T}_a$ (line 7), and add $\mathcal{T}_a$ to the tail of the root fallback node of $\mathcal{T}_{new}$ (line 8). Now $\mathcal{T}_{new}$ can achieve $c$ using these expanded actions if their precondition are met. We store these preconditions in $\mathcal{C}_{new}$ (line 9). 

If $\mathcal{T}_{new}$ is expanded (line 10), we need to decide where in $\mathcal{T}$ to place it. There are two cases: (1) $c$ is in $\mathcal{T}$, which means it was previously expanded by $\mathcal{T}$ itself. So we replace $c$ with $\mathcal{T}_{new}$ in $\mathcal{T}$ just like in single-robot BT expansion (line 11-12); (2) $c$ is not in $\mathcal{T}$, which means it was expanded by other BTs. To allow this BT to take actions to fulfill $c$, we add it to the tail of the root fallback node of $\mathcal{T}$ (line 13-14). 
\begin{proposition}\label{pro:inside}
    Given $\mathcal{T}$ is FTS from $R$ to $g$, if $\mathcal{T}$ is expanded by Algorithm \ref{alg:one_step} to $\mathcal{T}'$ given $c$, $c$ is in $\mathcal{T}$ and $\mathcal{C}_{new} \neq \emptyset$, then $\mathcal{T}'$ is FTS from $R'=R\cup \{ s\in \mathcal{S}| c_a \subseteq s, c_a\in \mathcal{C}_{new} \}$ to $g$.
\end{proposition}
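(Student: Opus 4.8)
\emph{Proof plan.} The plan is to reduce the claim to the soundness argument for single-robot one-step expansion \cite{cai2021bt}, since in the ``$c$ is in $\mathcal{T}$'' case lines 10--12 of Algorithm \ref{alg:one_step} coincide with single-robot in-tree expansion. First I would record the structure of $\mathcal{T}'$: because $c$ is in $\mathcal{T}$ and $\mathcal{C}_{new}\neq\emptyset$, the condition leaf $c$ of $\mathcal{T}$ is replaced by
\[
\mathcal{T}_{new}=Fallback\bigl(c,\ Sequence(c_{a_1},a_1),\ \dots,\ Sequence(c_{a_m},a_m)\bigr),
\]
where $a_1,\dots,a_m$ are exactly the actions of $\mathcal{A}$ passing the guard on line 5 and $c_{a_i}=pre(a_i)\cup(c\setminus add(a_i))$ for each $i$. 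From this I would read off the local behavior of $\mathcal{T}_{new}$ at a state $s$: it returns \constant{S} and ticks nothing iff $c\subseteq s$; otherwise it ticks the first $a_i$ with $c_{a_i}\subseteq s$ and returns \constant{R}; otherwise it returns \constant{F}. In particular $\mathcal{T}_{new}$ agrees with the leaf $c$ on every state satisfying $c$, which is what will keep $\mathcal{T}'$ consistent with $\mathcal{T}$ on $R$.

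Next I would establish the progress step. Fix a premise action $a$ with $c_a=pre(a)\cup(c\setminus add(a))\in\mathcal{C}_{new}$ and a state $s_t$ with $c_a\subseteq s_t$ but $c\not\subseteq s_t$. Then $a$ is applicable at $s_t$ since $pre(a)\subseteq c_a\subseteq s_t$, and once it finishes at step $t'=t+k$ we have $c\subseteq s_{t'}$: the line-5 guard $c\setminus del(a)=c$ gives $c\cap del(a)=\emptyset$, while $c\subseteq(c\setminus add(a))\cup add(a)\subseteq c_a\cup add(a)$, so
\[
c=c\setminus del(a)\subseteq\bigl(c_a\cup add(a)\bigr)\setminus del(a)\subseteq\bigl(s_t\cup add(a)\bigr)\setminus del(a)=f_a(s_t)=s_{t'}.
\]
Since whenever $c\not\subseteq s$ the fallback in $\mathcal{T}_{new}$ ticks the first $Sequence(c_{a_i},a_i)$ whose guard holds, this shows that from any state in $\{s\in\mathcal{S}:\exists c_a\in\mathcal{C}_{new},\,c_a\subseteq s\}$ not yet satisfying $c$, ticking $\mathcal{T}_{new}$ reaches a state satisfying $c$ in finite time.

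Then I would assemble the FTS property of $\mathcal{T}'$ from $R'=R\cup\{s\in\mathcal{S}:\exists c_a\in\mathcal{C}_{new},\,c_a\subseteq s\}$, relying on the standard invariants of BT planning \cite{cai2021bt}: a condition leaf $c$ of an FTS tree satisfies $\{s\in\mathcal{S}:c\subseteq s\}\subseteq R$, and along a tick $c$ is only reached from states that fail all subtrees to its left. Split on $s_0\in R'$. If $s_0\in R$, then $\mathcal{T}'$ can deviate from $\mathcal{T}$ only by reaching $\mathcal{T}_{new}$ at a state with $c\not\subseteq s$ and some $c_{a_i}\subseteq s$ --- where $\mathcal{T}$ would have returned \constant{F} at $c$; by the progress step $\mathcal{T}'$ instead reaches a state satisfying $c$ in finite time, which lies in $R$, after which the FTS of $\mathcal{T}$ takes over and $g$ is reached in finite time. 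If $s_0\in R'\setminus R$, then $c\not\subseteq s_0$ (otherwise $s_0\in R$), the tick reaches $\mathcal{T}_{new}$, the first applicable premise action is ticked, and after finite time the state satisfies $c$ and hence lies in $R$; we then finish by the previous case. Bounding the total time by the finite action durations plus the FTS horizon of $\mathcal{T}$ on $R$ yields that $\mathcal{T}'$ is FTS from $R'$ to $g$.

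The main obstacle I expect is not the set arithmetic, which is immediate from line 5, but making the ``deviation'' argument precise: one must import from the single-robot theory the structural facts that ticking genuinely reaches the leaf $c$ from the relevant states and that re-entering $R$ counts as progress (so no cycling occurs), and one must note that the ordering of the $Sequence(c_{a_i},a_i)$ children inside $\mathcal{T}_{new}$ is harmless because any applicable one re-establishes $c$. Once these are in place, Proposition \ref{pro:inside} is exactly the in-tree specialization of single-robot one-step expansion; the cross-tree case, where $c$ is not a leaf of $\mathcal{T}$, is not covered here and would need a separate argument.
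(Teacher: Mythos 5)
Your proposal is correct and follows essentially the same route as the paper: a local progress lemma showing each $Sequence(c_a,a)$ (and hence $\mathcal{T}_{new}$) is FTS from $\{s : c_a\subseteq s\}$ to $c$ via the line-5 guard $c\setminus del(a)=c$, followed by a case split on $s_0\in R$ versus $s_0\in R'\setminus R$ that chains $\mathcal{T}_{new}$'s finite-time achievement of $c$ into the assumed FTS of $\mathcal{T}$ on $R$. If anything, you are slightly more explicit than the paper about the subcase where the tick reaches the replaced leaf in a state with $c\not\subseteq s$, and about the imported invariant $\{s : c\subseteq s\}\subseteq R$, both of which the paper's proof uses implicitly.
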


\begin{proposition}\label{pro:outside}
    If $\mathcal{T}$ is expanded by Algorithm \ref{alg:one_step} to $\mathcal{T}'$ given $c$, $c$ is not in $\mathcal{T}$ and $\mathcal{C}_{new} \neq \emptyset$, then $\mathcal{T}'$ is FTS from $\mathcal{S}_{new}=\{ s\in \mathcal{S}| c_a \subseteq s, c_a\in \mathcal{C}_{new} \}$ to $c$.
\end{proposition}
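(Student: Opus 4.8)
The plan is to unfold Algorithm~\ref{alg:one_step} on the branch that applies here (lines 4--9 together with lines 13--14) and then mirror the soundness argument for single-robot BT expansion, but tracking the new target $c$ rather than the original goal. In this case the algorithm returns $\mathcal{T}' = Fallback(\mathcal{T}, \mathcal{T}_{new})$ with $\mathcal{T}_{new} = Fallback(c, \mathcal{T}_{a_1}, \dots, \mathcal{T}_{a_m})$ and $\mathcal{T}_{a_j} = Sequence(c_{a_j}, a_j)$, where $a_1,\dots,a_m$ are exactly the actions passing the test on line~5, $c_{a_j} = pre(a_j) \cup (c \setminus add(a_j))$, and $\mathcal{C}_{new} = \{c_{a_1},\dots,c_{a_m}\} \neq \emptyset$. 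Hence it suffices to show (i) that the subtree $\mathcal{T}_{new}$ is FTS from $\mathcal{S}_{new}$ to $c$, and (ii) that attaching $\mathcal{T}_{new}$ as the last child of the root fallback of $\mathcal{T}$ preserves this property for $\mathcal{T}'$ as a whole.

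For (i), the core is a set-algebra lemma: for each premise action $a$ we have $pre(a) \subseteq c_a$ and, by the second half of the line-5 test, $c \cap del(a) = \emptyset$; so if $c_a \subseteq s$ then $a$ is enabled and, by Eq.~\eqref{eqn:s_f}, the successor state $s' = s \cup add(a) \setminus del(a)$ contains $c$. I would prove $c \subseteq s'$ by writing $c = (c \cap add(a)) \cup (c \setminus add(a))$: the first part sits in $add(a)$ and misses $del(a)$, hence survives in $s'$; the second part sits in $c_a \subseteq s$ and also misses $del(a)$, hence survives in $s'$. Because Eq.~\eqref{eqn:s_f} applies the effect only after the full $k$ steps, the state is frozen while $a$ runs, so from any $s_0 \in \mathcal{S}_{new}$ the root fallback of $\mathcal{T}_{new}$ behaves as follows: if $c \subseteq s_0$ it returns \constant{S} at once with $c$ holding; otherwise it skips $c$ and every $\mathcal{T}_{a_j}$ whose guard $c_{a_j}$ fails, reaches the first $\mathcal{T}_{a_j}$ with $c_{a_j} \subseteq s_0$ (one exists by the definition of $\mathcal{S}_{new}$), ticks $a_j$ to \constant{R} for $k$ steps, and then $c$ holds, so the first child $c$ of $\mathcal{T}_{new}$ returns \constant{S} from then on. Thus there is a finite $\tau$ with $r_{\mathcal{T}_{new}} = $ \constant{R} before $\tau$ and $r_{\mathcal{T}_{new}} = $ \constant{S} with $c$ holding afterwards, i.e.\ $\mathcal{T}_{new}$ is FTS from $\mathcal{S}_{new}$ to $c$.

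Step (ii) is the part I expect to be the main obstacle, and it is precisely where the hypothesis ``$c$ is not in $\mathcal{T}$'' must be used. A fallback ticks $\mathcal{T}$ before $\mathcal{T}_{new}$, so on a state of $\mathcal{S}_{new}$ where $\mathcal{T}$ itself returns \constant{S} or \constant{R} control could never reach $\mathcal{T}_{new}$; to rule this out one needs that, in the regime where this cross-tree branch fires, $\mathcal{T}$ returns \constant{F} on $\mathcal{S}_{new}$ (the condition $c$ not already being part of $\mathcal{T}$'s own structure is what lets MRBTP guarantee this on the relevant frontier states). Granting that, $\mathcal{T}'$ inherits $\mathcal{T}_{new}$'s behavior verbatim, and since the state stays frozen while $a_j$ runs, $\mathcal{T}$ keeps returning \constant{F} and never interferes, so the same $\tau$ works for $\mathcal{T}'$ and the claim follows; the remaining bookkeeping is then symmetric to Proposition~\ref{pro:inside}. (If instead the intended reading of ``$\mathcal{T}'$ is FTS from $\mathcal{S}_{new}$ to $c$'' is the local one --- that the $c$-subtree inside $\mathcal{T}'$ has ROA $\mathcal{S}_{new}$ --- then (ii) is immediate from (i) because $\mathcal{T}_{new}$ occurs intact as a subtree of $\mathcal{T}'$.)
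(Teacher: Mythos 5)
Your proof is correct and follows essentially the same route as the paper: your part (i) is exactly the paper's two lemmas (each $Sequence(c_a,a)$ is FTS from $\{s\in\mathcal{S} \mid c_a\subseteq s\}$ to $c$ because $pre(a)\subseteq c_a$ and $c\cap del(a)=\emptyset$, and the root fallback over these sequences is FTS from the union), and your part (ii) is the paper's concluding step for the cross-tree case. The obstacle you single out in (ii) --- that the root fallback ticks $\mathcal{T}$ before $\mathcal{T}_{new}$, so one must still argue that $\mathcal{T}_{new}$ is actually reached on $\mathcal{S}_{new}$ --- is genuine, but the paper's own proof simply asserts that ``$\mathcal{T}_{new}$ in $\mathcal{T}'$ will be ticked'' for every $s\in\mathcal{S}_{new}$ without justification (the assertion is only really discharged later, inside the induction of Proposition 3, where the relevant states lie outside the existing ROA), so your more cautious treatment of that step is, if anything, more complete than the published argument.
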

The above two propositions state the changes in the ROA after one-step cross-tree expansion. If $c$ is in $\mathcal{T}$ (Proposition \ref{pro:inside}), the ROA of $\mathcal{T}$ will be expanded by $\mathcal{C}_{new}$ to achieve $g$. If $c$ is not in $\mathcal{T}$ (Proposition \ref{pro:outside}), then $c$ will be treated as a new sub-goal for $\mathcal{T}$ to be achieved from $\mathcal{S}_{new}$.\footnote{All the formal proofs are in the Appendix.}

\begin{algorithm}[t]
\caption{MRBTP}
\label{alg:MABTP}
\textbf{Input}: problem $\tuple{\mathcal{S},\mathcal{L},\multi{\mathcal{A}},\mathcal{M},s_0,c}$ \\
\textbf{Output}: solution $\Phi = \multi{\mathcal{T}}$ 
\begin{algorithmic}[1]

\STATE $\mathcal{C}_U\gets \{g\}$ \label{line:cu}
		\hfill $\triangleright$ conditions to be explored
\StateCommentLabel{$\mathcal{C}_E\gets \emptyset$}{expanded conditions}{line:ce}
\FOR{$i = 1$ \TO $n$}
    \STATE $\mathcal{T}_i \gets Fallback(g)$ \label{line:initBT} \Comment{init the BTs}
\ENDFOR
\WHILE{ $\mathcal{C}_U \neq \emptyset$}
\STATE $c\gets $ \constant{Pop(}$\mathcal{C}_U$\constant{)} \label{line:pickC}
\hfill $\triangleright$ explore $c$

\StateComment{\textbf{if} \algofunc{HasSubSet}{$c,\mathcal{C}_E$} \textbf{then} \textbf{continue}}{prune}  

\FOR{$i = 1$ \TO $n$}
    \STATE $\mathcal{T}_i, \mathcal{C}_{new} \gets$ \algofunc{ExpandOneRobot}{$\mathcal{T}_i,\mathcal{A}_i,c$} \label{line:initBT2}
    \IF{\algofunc{HasSubSet}{$s_0,\mathcal{C}_{new}$}}
        \RETURN $\Phi=\{\mathcal{T}_i\}_{i=1}^n$ \Comment{return a solution}
    \ENDIF
    \State{$\mathcal{C}_E \gets \mathcal{C}_E \cup \mathcal{C}_{new}$}
    \StateCommentLabel{$\mathcal{C}_U \gets \mathcal{C}_U \cup \mathcal{C}_{new}$}{add new conditions}{}
\ENDFOR

\ENDWHILE
  
\RETURN \constant{Unsolvable}
    
\end{algorithmic}
\end{algorithm}

\subsubsection{MRBTP}
Algorithm \ref{alg:MABTP} gives the pseudocode of MRBTP to plan BTs for the whole robot team. The algorithm initializes a set of conditions to be explored $\mathcal{C}_U=\{g\}$ and a set of expanded conditions $\mathcal{C}_E=\emptyset$ (line 1-2). The BT for each robot $i$ is initialized as $\mathcal{T}_i=Fallback(g)$ (line 4), which is FTS from $\emptyset$ to $g$. Then the algorithm continually explores conditions in $\mathcal{C}_U$ (line 5-6) until a solution is found, otherwise it returns \constant{Unsolvable} (line 14). For each explored $c$, it is either pruned if $\exists c' \in \mathcal{C}_E, c'\subseteq c$ (line 7), or expanded by all robots through one-step cross-tree expansion (line 8-9). After the one-step expansion for each robot, the newly expanded conditions $\mathcal{C}_{new}$ will be appended to $\mathcal{C}_E$ and $\mathcal{C}_U$ (line 12-13). If at that time $\exists c'\in \mathcal{C}_{new}, c'\subseteq s_0$, which means a solution is found, the algorithm returns $\Phi=\{\mathcal{T}_i\}_{i=1}^n$ as the solution (line 10-11).
\begin{proposition}\label{pro-k-loop2}: After the $k$-th ($k\geq1$) iteration of the while loop in Algorithm \ref{alg:MABTP}, where the explored condition is $c^{k}$, $\Phi^{k}=\multi{\mathcal{T}^{k}}$ is FTS from ROA $R^k=R^{k-1} \cup \bigcup_{i=1}^n \mathcal{S}^k_i$ to goal $g$, where $\mathcal{S}^{k}_i=\{s\in \mathcal{S}|c_a\subseteq s,c_a\in C^k_{i,new}\}$.
\end{proposition}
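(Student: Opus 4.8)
The plan is to prove Proposition~\ref{pro-k-loop2} by induction on the loop index $k$, carrying a slightly strengthened hypothesis: after iteration $k-1$, (a) $\Phi^{k-1}$ is FTS from $R^{k-1}$ to $g$, and (b) each $\mathcal{T}_i^{k-1}$ is individually FTS from some set $R_i^{k-1}$ to $g$ (part (b) is what lets us invoke Proposition~\ref{pro:inside}). I would also isolate two auxiliary facts that carry the genuinely multi-robot weight: (L1) \emph{monotonicity} — an \constant{ExpandOneRobot} call on a team that is FTS from $R$ to some target yields a team still FTS from $R$ to that target, since new subtrees are only appended as trailing fallback children or substituted as refinements of a condition they re-check (the team-level analogue of the single-robot result); and (L2) \emph{sequential composition} — if $\Phi$ is FTS from $B$ to $g$ and one of its trees $\mathcal{T}_i$ is FTS from $A$ to a condition $c$ with $\{s\in\mathcal S\mid c\subseteq s\}\subseteq B$, then $\Phi$ is FTS from $A$ to $g$. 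For the base case $k=1$: $c^1=g$ is popped, it is not pruned ($\mathcal C_E=\emptyset$), and since $g$ is the root condition of every $\mathcal T_i^0=Fallback(g)$ the expansion is in-tree, so Proposition~\ref{pro:inside} gives each $\mathcal T_i^1$ FTS from $R^0\cup\mathcal S_i^1$ to $g$, and L1/L2 assemble the team claim with $R^1=R^0\cup\bigcup_i\mathcal S_i^1$.

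For the inductive step, assume the hypothesis for $k-1$; iteration $k$ pops $c^k$. If $c^k$ is pruned, no tree changes, every $\mathcal C_{i,new}^k=\emptyset$, hence every $\mathcal S_i^k=\emptyset$, so $R^k=R^{k-1}$ and $\Phi^k=\Phi^{k-1}$ — done trivially. Otherwise each robot $i$ runs \constant{ExpandOneRobot}$(\mathcal T_i^{k-1},\mathcal A_i,c^k)$, producing $(\mathcal T_i^k,\mathcal C_{i,new}^k)$, and there are three sub-cases: if $\mathcal C_{i,new}^k=\emptyset$, nothing changes for robot $i$; if $\mathcal C_{i,new}^k\neq\emptyset$ and $c^k$ is in $\mathcal T_i^{k-1}$, Proposition~\ref{pro:inside} gives $\mathcal T_i^k$ FTS from $R_i^{k-1}\cup\mathcal S_i^k$ to $g$; if $\mathcal C_{i,new}^k\neq\emptyset$ and $c^k$ is not in $\mathcal T_i^{k-1}$, Proposition~\ref{pro:outside} gives $\mathcal T_i^k$ FTS from $\mathcal S_i^k$ to $c^k$ (note Proposition~\ref{pro:outside} needs no prior FTS property of $\mathcal T_i$).

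It then remains to assemble $\Phi^k$ FTS from $R^k=R^{k-1}\cup\bigcup_i\mathcal S_i^k$ to $g$, by cases on the initial state. From $R^{k-1}$: hypothesis (a) and L1, since $\Phi^k$ is $\Phi^{k-1}$ after finitely many \constant{ExpandOneRobot} calls. From an in-tree $\mathcal S_i^k$: $\mathcal T_i^k$ alone is FTS to $g$, so $\Phi^k$ is too — the other trees only wait or make redundant progress, and, crucially, every admitted action passes the \constant{ExpandOneRobot} test $c\setminus del(a)=c$, so an action used to establish a condition cannot later delete it, making achieved literals (in particular $g$) persistent under the joint tick. From a cross-tree $\mathcal S_i^k$: I would use the bookkeeping fact $\{s\in\mathcal S\mid c^k\subseteq s\}\subseteq R^{k-1}$ — either $c^k=g$, and $\{s\mid g\subseteq s\}$ lies in the ROA of every $\Phi^m$ because each root fallback tests $g$ first, or $c^k$ entered $\mathcal C_U$ as some $c_a\in\mathcal C_{j,new}^m$ at an earlier iteration $m<k$, whence $\{s\mid c^k\subseteq s\}\subseteq\mathcal S_j^m\subseteq R^m\subseteq R^{k-1}$ — and then Proposition~\ref{pro:outside}, L1, and L2 (with $A=\mathcal S_i^k$, $c=c^k$, $B=R^{k-1}$) give $\Phi^k$ FTS from $\mathcal S_i^k$ to $g$. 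The union over all cases gives the team statement, and the in-tree applications of Proposition~\ref{pro:inside} refresh the per-tree invariant (b) for $k$.

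The main obstacle is L2, because proving it forces one to control genuinely concurrent execution: while robot $i$ drives the state toward the sub-goal $c^k$ the team must remain in region \constant{R} (some robot stays running) and no tree may prematurely report success (a tree succeeds only via its $g$-test or via an action sequence whose completion re-establishes its parent condition and, up the chain, $g$); once $c^k$ holds it must stay held under the joint tick; and the total horizon must be finite, which follows by summing the finitely many finite sub-horizons along the finite DAG of conditions rooted at $g$. A secondary difficulty is cross-tree non-interference — \constant{ExpandOneRobot} forbids an action only from deleting the very condition it is expanded for, so ruling out one robot's action erasing a literal another robot's subtree depends on ultimately rests on the consistency already baked into $s_{t+\Delta t_\Phi}=s_t\cup\bigcup_i(add(a_i)\setminus del(a_i))$; I would make this assumption explicit and check it persists along the chains MRBTP produces. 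A final, minor point: robots are expanded one at a time within an iteration, so when invoking properties of $\mathcal T_j^k$ I should use the end-of-iteration trees uniformly, which is harmless since each robot edits only its own tree and the $\mathcal C_{i,new}^k$ are merged into $\mathcal C_E,\mathcal C_U$ only afterward.
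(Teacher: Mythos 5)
Your skeleton is the same as the paper's: induction on the loop index, with the key composition step being ``reach the explored condition $c^k$ from the newly added states, then observe that any state satisfying $c^k$ already lies in $R^{k-1}$, so the previous iterate's FTS property finishes the run to $g$.'' Your ``bookkeeping fact'' $\{s\mid c^k\subseteq s\}\subseteq R^{k-1}$ is exactly the step the paper uses (it asserts $s'\in R^{k}$ after the new subtree achieves $c$), and you actually justify it more carefully than the paper does, by tracing $c^k$ back to the iteration at which it entered $\mathcal{C}_U$. Your L1 corresponds to the paper's case (1) (unchanged execution logic on the old ROA), and your L2 corresponds to the paper's Lemma on the collection of newly expanded subtrees being FTS from $\bigcup_i\mathcal{S}^k_i$ to $c^k$. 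The per-tree invariant (b) and the in-tree/cross-tree case split are finer-grained than the paper's treatment but harmless.

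The one genuine gap is in how you propose to discharge L2. You identify concurrent execution as the main obstacle and suggest resolving it by a persistence argument: every admitted action satisfies $c\setminus del(a)=c$, so achieved literals stay achieved under the joint tick, plus an unspecified ``consistency'' assumption on $f_\Phi$. That argument does not go through. The expansion test only forbids an action from deleting the \emph{particular} condition it was expanded for; it does not prevent two actions, each individually safe, from jointly driving the state out of the ROA. The paper's appendix gives a concrete two-action counterexample ($pre(a_1)=pre(a_2)=\{1,2\}$, $del(a_1)=\{1\}$, $del(a_2)=\{2\}$): applying either alone stays in the ROA, applying both simultaneously leaves it. The paper closes this hole not by a persistence argument but by an explicit execution assumption stated immediately after the proposition: at any time step only the highest-priority robot with a satisfied precondition executes, which reduces L2 to a trivial case analysis over which single subtree runs. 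Your proof needs to import that serial-execution assumption (or an equivalent scheduling discipline) as a hypothesis rather than attempt to derive non-interference from the expansion checks; with it, your L2 and hence the whole argument closes.
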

Note that Proposition \ref{pro-k-loop2} cannot be naturally extended from the single BT planning as it might seem. This proposition requires the assumption that robots execute in an appropriate order (at any time step, only the robot with the highest priority can execute an action if its precondition is satisfied); otherwise, deadlocks or departures from the ROA could occur. Fortunately, we can always use mechanisms such as deadlock detection during execution, ensuring that this serial execution is only employed in exceptional cases. In the vast majority of cases, robots can safely execute in parallel, so there is no need to worry that this assumption will reduce the execution efficiency of the robot team.

\begin{proposition}\label{pro-sound}
Algorithm \ref{alg:MABTP} is sound, i.e. if it returns a result $\Phi$ rather than \constant{Unsolvable}, then $\Phi$ is a solution of Problem 1.
\end{proposition}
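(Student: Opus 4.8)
The plan is to reduce soundness to Proposition~\ref{pro-k-loop2} via the single observation that Algorithm~\ref{alg:MABTP} returns a value other than \constant{Unsolvable} only through its inner \textbf{return} (line~11), and only immediately after the test \algofunc{HasSubSet}{$s_0,\mathcal{C}_{new}$} has succeeded. So suppose the algorithm returns $\Phi=\{\mathcal{T}_i\}_{i=1}^n$ during the $k$-th iteration of the while loop, with explored condition $c^k$, right after \algofunc{ExpandOneRobot}{$\mathcal{T}_i,\mathcal{A}_i,c^k$} has been executed for some robot $i\in\{1,\dots,n\}$. At that moment $\mathcal{C}_{new}=C^k_{i,new}$ and the success of \algofunc{HasSubSet}{$s_0,\mathcal{C}_{new}$} means there is some $c_a\in C^k_{i,new}$ with $c_a\subseteq s_0$; hence $s_0\in\mathcal{S}^k_i=\{s\in\mathcal{S}\mid c_a\subseteq s,\ c_a\in C^k_{i,new}\}$.

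Next I would show that the returned $\Phi$ is FTS from a ROA $R$ with $s_0\in R$ to $g$. The trees in $\Phi$ at the return point are $\mathcal{T}_1^k,\dots,\mathcal{T}_i^k$ (already re-expanded with $c^k$ in this iteration) together with $\mathcal{T}_{i+1}^{k-1},\dots,\mathcal{T}_n^{k-1}$ (not yet touched in this iteration). This is exactly the intermediate state reached partway through the induction behind Proposition~\ref{pro-k-loop2}: starting from $\Phi^{k-1}$, which by Proposition~\ref{pro-k-loop2} is FTS from $R^{k-1}$ to $g$, and applying Proposition~\ref{pro:inside} or Proposition~\ref{pro:outside} once per robot for robots $1,\dots,i$, one obtains that this intermediate $\Phi$ is FTS from $R:=R^{k-1}\cup\bigcup_{j=1}^{i}\mathcal{S}^k_j$ to $g$. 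Since $s_0\in\mathcal{S}^k_i\subseteq R$, we conclude $\Phi$ is FTS from $R\ni s_0$ to $g$. It then remains to verify the remaining clauses of Problem~1: $\Phi$ is a set of $n$ BTs, and each $\mathcal{T}_i$ is built with $\mathcal{A}_i$. Both are immediate, since each $\mathcal{T}_i$ is initialized to $Fallback(g)$ and thereafter only modified by \algofunc{ExpandOneRobot}{$\mathcal{T}_i,\mathcal{A}_i,c$}, which inserts only sequences $Sequence(c_a,a)$ with $a\in\mathcal{A}_i$; the prune branch (line~7) and the updates to $\mathcal{C}_E,\mathcal{C}_U$ never alter any $\mathcal{T}_i$. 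Hence $\Phi$ is a solution of Problem~1.

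The main obstacle I anticipate is the bookkeeping around the mid-iteration return: Proposition~\ref{pro-k-loop2} is stated for the state of $\Phi$ at the \emph{end} of a complete while-loop iteration, whereas the algorithm returns in the middle of the inner loop over robots, with $\mathcal{T}_{i+1},\dots,\mathcal{T}_n$ still in their iteration-$(k-1)$ form. Making this ``partial-iteration'' FTS claim precise — in particular checking that wrapping one robot's tree in a new fallback node, while leaving the other trees un-expanded this round, never shrinks the region of attraction, so that the ROA genuinely only accumulates the sets $\mathcal{S}^k_j$ (and that this holds under the priority-ordered execution semantics assumed in Proposition~\ref{pro-k-loop2}) — is the one step that needs care. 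Everything else is a direct substitution into the FTS guarantee already established for the team.
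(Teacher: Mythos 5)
Your proposal is correct and follows essentially the same route as the paper: both reduce soundness to the ROA statement of Proposition~\ref{pro-k-loop2}, observing that the return fires only when some $c_a\in\mathcal{C}_{new}$ satisfies $c_a\subseteq s_0$, so $s_0$ lies in the accumulated region of attraction and the returned $\Phi$ is FTS from a region containing $s_0$ to $g$. In fact you are slightly more careful than the paper's own proof, which applies the end-of-iteration statement directly and silently ignores the fact that the return occurs mid-way through the inner loop over robots; your explicit handling of the partially expanded team (and the check that each $\mathcal{T}_i$ is built only from $\mathcal{A}_i$) is a refinement of, not a departure from, the published argument.
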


\begin{figure*}[t]
	\centering
	\includegraphics[width=1\textwidth]{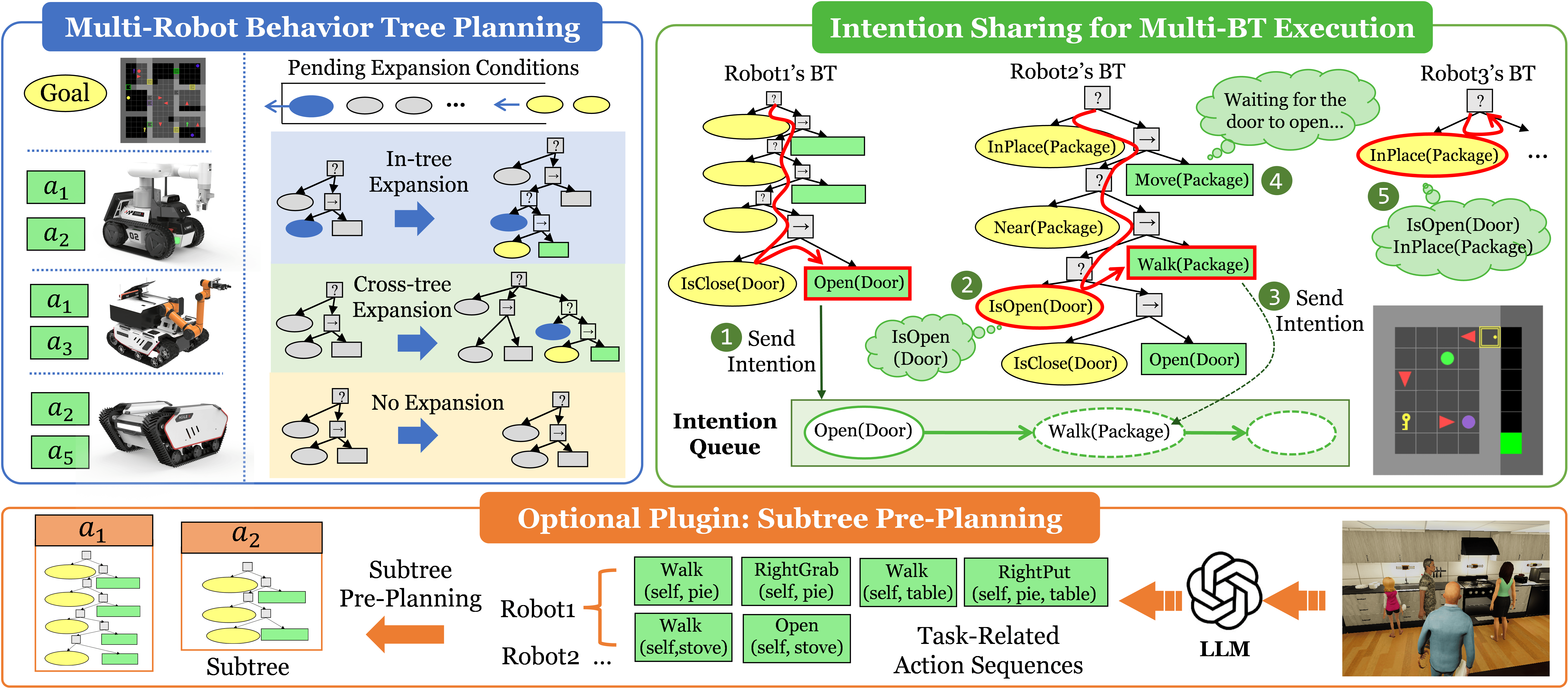}
	\caption{The framework of our paper. (1) MRBTP. A sound and complete algorithm for the multi-robot BT planning problem, capable of coordinating diverse actions across different BTs through cross-tree expansion. (2) Intention Sharing. Robots share intentions with each other during execution, enabling multi-BT parallelization without compromising failure tolerance. (3) Optional Plugin: Subtree Pre-planning. This plugin utilizes LLMs to pre-plan task-specific subtrees, establishing long-horizon action sequences to enhance MRBTP's planning and execution efficiency.}


	\label{fig:failure_example}
\end{figure*}

\begin{proposition}\label{pro-complete}
Algorithm \ref{alg:MABTP} is complete, i.e., if Problem 1 is solvable, the algorithm returns a $\Phi$ which is a solution.
\end{proposition}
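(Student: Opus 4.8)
The plan is to establish completeness in two stages: first that Algorithm~\ref{alg:MABTP} always terminates, and then that, whenever Problem~1 admits a solution, termination cannot occur through the \constant{Unsolvable} branch. Together with the soundness guarantee of Proposition~\ref{pro-sound}, this yields the claim. I would organise stage two as a proof by contradiction: assume Problem~1 is solvable but the run ends in \constant{Unsolvable}, and derive that the algorithm must in fact have returned a BT set.

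For termination, I would argue from finiteness. Every condition ever inserted into $\mathcal{C}_U$ is a subset of the finite literal set $\mathcal{L}$ --- it is either $g$ or some precondition-shaped set $c_a = pre(a)\cup c\setminus add(a)$ produced in line~6 of Algorithm~\ref{alg:one_step} --- so only finitely many distinct conditions can appear. The pruning test \algofunc{HasSubSet}{$c,\mathcal{C}_E$} guarantees that a condition is genuinely expanded at most once: once a subset of it has entered $\mathcal{C}_E$, every later pop of a superset is discarded. Hence the number of genuine one-step cross-tree expansions is finite, each such expansion either triggers an early return or enlarges $\mathcal{C}_E$, and on every other iteration $\mathcal{C}_U$ strictly shrinks; therefore the \texttt{while} loop halts, returning either a BT set $\Phi$ or \constant{Unsolvable}.

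For the completeness core, solvability gives a BT set $\Phi^\ast$ that is FTS from some $R^\ast\ni s_0$ to $g$. Unrolling its execution from $s_0$, and serialising the joint actions in the priority order discussed after Proposition~\ref{pro-k-loop2}, produces a finite action sequence $a_1,\dots,a_m$ with $a_j\in\mathcal{A}_{i_j}$ and intermediate states $s_0\xrightarrow{a_1}s_1\cdots\xrightarrow{a_m}s_m$, where $pre(a_j)\subseteq s_{j-1}$ and $g\subseteq s_m$. Regressing $g$ backward through this sequence yields a condition chain $g=c^{(m)},c^{(m-1)},\dots,c^{(0)}$ in which each $c^{(j-1)}$ is exactly the set $pre(a_j)\cup c^{(j)}\setminus add(a_j)$ computed in line~6 of Algorithm~\ref{alg:one_step}, and $c^{(0)}\subseteq s_0$. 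I would then prove, by induction on $m-j$, that as long as the algorithm has not already returned, $\mathcal{C}_E$ eventually contains some $\hat c^{(j)}\subseteq c^{(j)}$: the base case $c^{(m)}=g$ is popped first and, not being pruned, is expanded; for the inductive step, when $\hat c^{(j)}$ (or the ancestor that subsumed it) is popped and expanded, the one-step expansion for robot $i_j$ includes the action $a_j$, since its guard in line~5 ($c^{(j)}\cap(pre(a_j)\cup add(a_j)\setminus del(a_j))\neq\emptyset$ and $c^{(j)}\setminus del(a_j)=c^{(j)}$) holds along the witness execution, so a subset of $c^{(j-1)}$ is placed in $\mathcal{C}_{new}$ and hence in $\mathcal{C}_E$ unless pruning intervenes --- which, by Proposition~\ref{pro-k-loop2}, happens only when an even smaller subset is already expanded. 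Applying this at $j=0$ produces some $\hat c^{(0)}\subseteq c^{(0)}\subseteq s_0$ in $\mathcal{C}_{new}$, so the test \algofunc{HasSubSet}{$s_0,\mathcal{C}_{new}$} in line~10 succeeds and the algorithm returns a $\Phi$, contradicting the \constant{Unsolvable} outcome; by Proposition~\ref{pro-sound} this $\Phi$ is a genuine solution.

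The main obstacle I expect is making the pruning argument airtight: discarding a popped condition $c$ because some $c'\subseteq c$ lies in $\mathcal{C}_E$ must never destroy reachability along the witness chain. The right way to see this is that, by Proposition~\ref{pro-k-loop2}, each condition already in $\mathcal{C}_E$ contributes its whole up-set $\{s:c'\subseteq s\}$ to the current ROA for $g$; since $c'\subseteq c$ gives $\{s:c\subseteq s\}\subseteq\{s:c'\subseteq s\}$, every state that expanding $c$ could add is already in the ROA, so a chain routed through $c'$ dominates one routed through $c$. Formalising this domination --- and verifying it still holds once cross-tree expansion lets a single condition serve simultaneously as a sub-goal for several BTs (Propositions~\ref{pro:inside}--\ref{pro:outside}) --- is the delicate part; a secondary subtlety is justifying the serialisation of the multi-robot execution into the single chain $a_1,\dots,a_m$, which is exactly where the priority-order assumption noted after Proposition~\ref{pro-k-loop2} enters.
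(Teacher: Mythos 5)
Your proposal matches the paper's proof in all essentials: termination from finiteness of conditions, then a contradiction argument that regresses the goal backward along a serialized witness execution $(s_0,a_1,\dots,a_m,s_m)$ and shows by induction that each regressed condition (or a subset of it) is expanded, so the \algofunc{HasSubSet}{$s_0,\mathcal{C}_{new}$} test must eventually fire. The only notable differences are that the paper's inductive step explicitly splits into the case where $a_j$ is irrelevant to the current condition (so the condition simply carries over unchanged) versus the case where the guard of line~5 holds and $c_{a_j}$ is created, whereas you regress through every action as if relevant; conversely, you treat the interaction with pruning more carefully than the paper does.
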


Proposition \ref{pro-sound} can be proven by strong induction based on Proposition \ref{pro-k-loop}, and Proposition \ref{pro-complete} can be proven based on Proposition \ref{pro-sound}. These two propositions state the soundness and completeness of MRBTP, which makes it an effective algorithm to solve the multi-robot BT planning problem.

The time complexity of MRBTP in the worst case is $O(|\bigcup_{i=1}^n \mathcal{A}_i||\mathcal{S}||\mathcal{L}|)$, which is polynomial to the system size. In this case, the algorithm has to explore all states $s \in \mathcal{S}$ to find a solution. And in each exploration, the actions of all robots will be checked, with the checking complexity of $O(|\mathcal{L}|)$.

\subsection{Intention Sharing for Multi-BT Execution}

From the MRBTP planning process, we can observe that if multiple robots have identical actions (or similar actions with the same effect), MRBTP will expand them simultaneously in different BTs. This could lead to backup structures. These structures are beneficial for fault tolerance because if one robot fails, others can take over and complete the action. However, when multiple robots are available, backup structures can result in redundant execution. To avoid this, we introduce the multi-BT intention sharing method based on communication.




\paragraph{Intention Queue}
During execution, each robot $i$ maintains an intention queue $\mathcal{I}_i = (a_1, a_2, \dots, a_m)$ that indicates the actions being performed by other robots. In a situation with good communication, all robots' intention queues should remain consistent. Therefore, in the following text, we use $\mathcal{I}$ to refer to the intention queues of all robots. Based on the intention queue, we can calculate the belief success space $\mathcal{B}^S_i$ and the belief failure space $\mathcal{B}^F_i$ for robot $i$:
\setlength{\jot}{5pt}  
\begin{gather}
\mathcal{B}^S_{i} = \bigcup_{k=1}^{j-1} \left( add(a_k) \setminus del(a_k) \right) \\
\mathcal{B}^F_i = \bigcup_{k=1}^{j-1} \left( del(a_k) \setminus add(a_k) \right) 
\end{gather}
where $j$ is the index of its own action $a_j$ in the intention queue $\mathcal{I}$. If $j=1$, then $\mathcal{B}^S_i = \mathcal{B}^F_i = \emptyset$, which means the action is not dependent on any other's intention. If the robot currently has no action, it will be treated as $j = m+1$ when calculating belief spaces.

$\mathcal{B}^S$ and $\mathcal{B}^F$ will be used during the ticks of each BT. For each atomic condition node represented by a single literal $c = {l}$, it will first check if $l$ is in the belief spaces when ticked. If $l \in \mathcal{B}^S$, it returns \constant{S} without interacting with the environment, and returns \constant{F} when $l \in \mathcal{B}^F$. 

Whenever a robot $i$ exits an action or enters a new one, it will be broadcast to every other robot. Each robot then removes the old action of robot $i$ from the intention queue $\mathcal{I}$ (if it exists) and pushes the new action into it (if applicable). After this, each robot will update its belief spaces $\mathcal{B}^S$ and $\mathcal{B}^F$ to adjust its actions reactively. Note that an action exiting or entering may be due to two cases: (1) the environment state has changed, or (2) the belief spaces have changed. As a result, any addition or removal of actions in the intention queue $\mathcal{I}$ may lead to adjustments in other actions, creating a chain reaction. In other words, our intention-sharing method maintains the reactivity and robustness of BTs in response to uncertain environments.

\subsubsection{Parallelism and Blocking}

While intention sharing can avoid redundant execution, it also enhances action parallelism within the robot team. For example, as shown in Figure \ref{fig:failure_example}, in a warehouse management scenario, there are two robots capable of opening doors and transporting packages. They have expanded almost identical tree structures, sequentially executing \constant{Open}\constant{(Door)}, \constant{Walk}\constant{(Package)}, and \constant{Move}\constant{(Package)}. However, since \constant{IsClose(Door)}$\in s_0$, both robots satisfy the precondition to execute \constant{Open} \constant{(Door)}. Without intention sharing, they would perform this action simultaneously, causing redundancy. With intention sharing, however, if robot $1$ ticks its BT $\mathcal{T}_1$ first, it will execute \constant{Open(Door)} and send this intention to robot $2$. For robot $2$, \constant{IsOpen(Door)} $\in \mathcal{B}^S_2$ after updating the intention queue $\mathcal{I}$, so the corresponding condition node for \constant{IsOpen(Door)} will return \constant{S}, allowing the BT $\mathcal{T}_2$ to continue ticking and start executing \constant{Walk(Package)}. This transforms a serial BT structure into parallel execution.

However, when robot $2$ attempts to execute \constant{Move} \constant{(Package)}, which relies on the precondition \constant{IsOpen} \constant{(Door)}, robot $2$ will wait until the door is actually opened by robot $1$. Formally speaking, if $l\in \mathcal{B}^S_i$ but not in the current state $l\notin s$, when robot $i$ attempts to perform an action $a$ where $l\in pre(a)$, $a$ will be blocked. In this case, robot $i$ shares the intention of $a$, and $a$ returns \constant{R} as if it were executing, but it is actually doing nothing. The blocking mechanism prevents actions from being executed under incorrect preconditions, while also enabling the parallel execution of subsequent actions, thereby further enhancing the execution efficiency of the robot team.

\subsection{Optional Plugin: Subtree Pre-planning}

While MRBTP with intention sharing is proven to be an effective and efficient algorithm for the multi-robot BT planning problem, there is still room for further improving its efficiency. To achieve this, we begin by considering the following observations.
\begin{itemize}
\item During planning, the same tree structure might be generated multiple times, especially when multiple robots have overlapping action spaces.
\item During execution, sharing the intentions of every short-horizon atomic action not only increases the communication overhead but is also ineffective for long-term task scheduling.
\end{itemize}

A natural idea is that if we can obtain some long-horizon actions for each robot that are beneficial to the task, which we call subtrees, and add these actions to the corresponding robot's action space. During planning, we let these subtrees be prioritized over atomic actions, thereby speeding up the search for solutions and avoiding redundant planning. During execution, we only share the intentions of these subtrees. If the subtrees are well-designed, this approach can reduce communication overhead while also improving the efficiency of parallel execution.

\subsubsection{Subtree Pre-planning}

Let's first assume that we have obtained an action sequence $A=(a_1,a_2,\dots, a_m)$ for planning the subtree, and then consider how to use LLMs to generate task-related action sequences for each robot. Due to the modularity of the BT, we can treat the action sequence $A$ as a long-horizon action, and its action model can be calculated:
\setlength{\jot}{3pt}  
\begin{align}
pre(A) &= \bigcup_{j=1}^m \left(pre(a_j) \setminus \bigcup_{k=1}^j add(a_j)\right)\\
add(A) &= \bigcup_{j=1}^m \left( add(a_j) \setminus del(a_j) \right) - pre(A) \\
del(A) &= \bigcup_{j=1}^m \left( del(a_j) \setminus add(a_j) \right) 
\end{align}

We can obtain the tree structure of execution actions in $A$ sequentially by running a single-robot BT planning algorithm, with constraints on the order of actions to be expanded, a process we call subtree pre-planning.

However, to make a subtree behave like an atomic action, i.e., not to exit the precondition of $A$ while running in intermediate states, we need to introduce an additional subtree control structure, as illustrated in Figure \ref{fig-subtree_bt}. The subtree $\mathcal{T}_A$ has the preconditions \constant{Close(Door)} and \constant{Empty(Hand)}, but after \constant{Get(Key)}, the \constant{Empty(Hand)} condition is no longer satisfied. In the conventional BT planning algorithm, this would result in the subsequent actions not being ticked, causing the entire subtree $\mathcal{T}_A$ to fail. To address this issue, we introduce three subtree control nodes: \constant{EnterSubtree}, \constant{ExitSubtree}, and \constant{RunningSubtree}. If $pre(A)$ is satisfied and the robot is not currently running this subtree $\mathcal{T}_A$, then \constant{EnterSubtree} will be executed. This action will change the status of the subtree to running. The \constant{RunningSubtree} will return \constant{S} until \constant{ExitSubtree} is executed, or the BT begins executing a new action due to a change in the environment state. The parameter for the three nodes can be any identifier of the subtree. A simple way is to use the add effect $add(a_m)$ of the last action in the action sequence $A$ as the identifier.


\begin{figure}[t]
	\centering
	\includegraphics[width=0.47\textwidth]{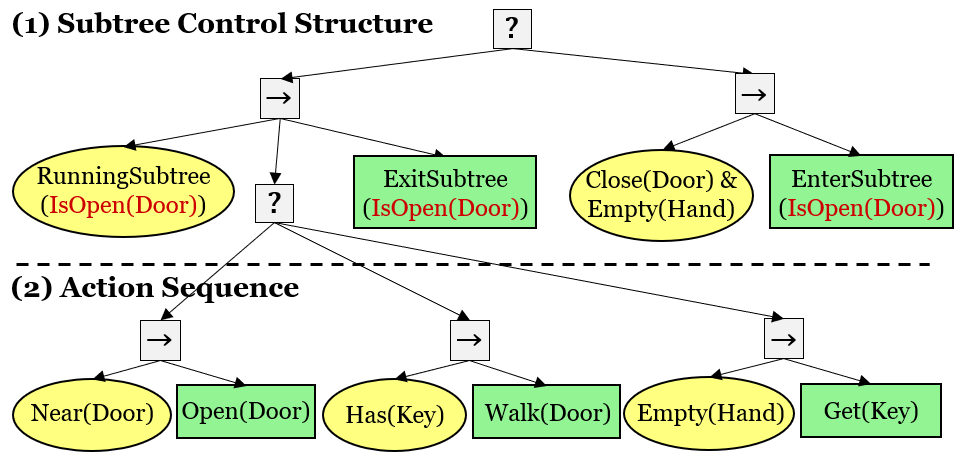}
	\caption{ An example of a pre-planned subtree structure for open a door.}
	\label{fig-subtree_bt}
\end{figure}

\begin{figure*}[ht]
    \centering
    \begin{subfigure}[b]{0.49\textwidth}
        \centering
        \includegraphics[width=\textwidth]{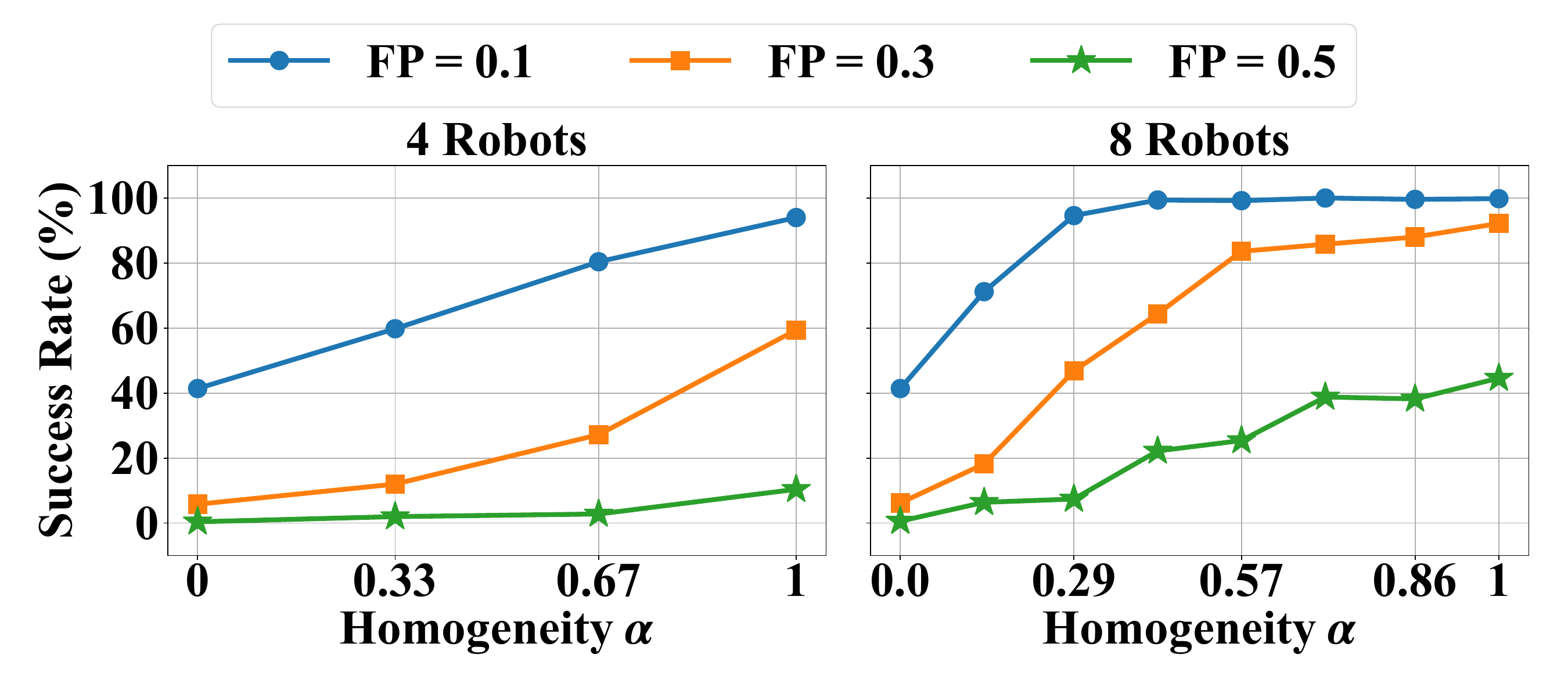}
        \caption{The impact of homogeneity on robustness}
        \label{fig:robust_results}
    \end{subfigure}
    \hfill
    \begin{subfigure}[b]{0.49\textwidth}
        \centering
        \includegraphics[width=\textwidth]{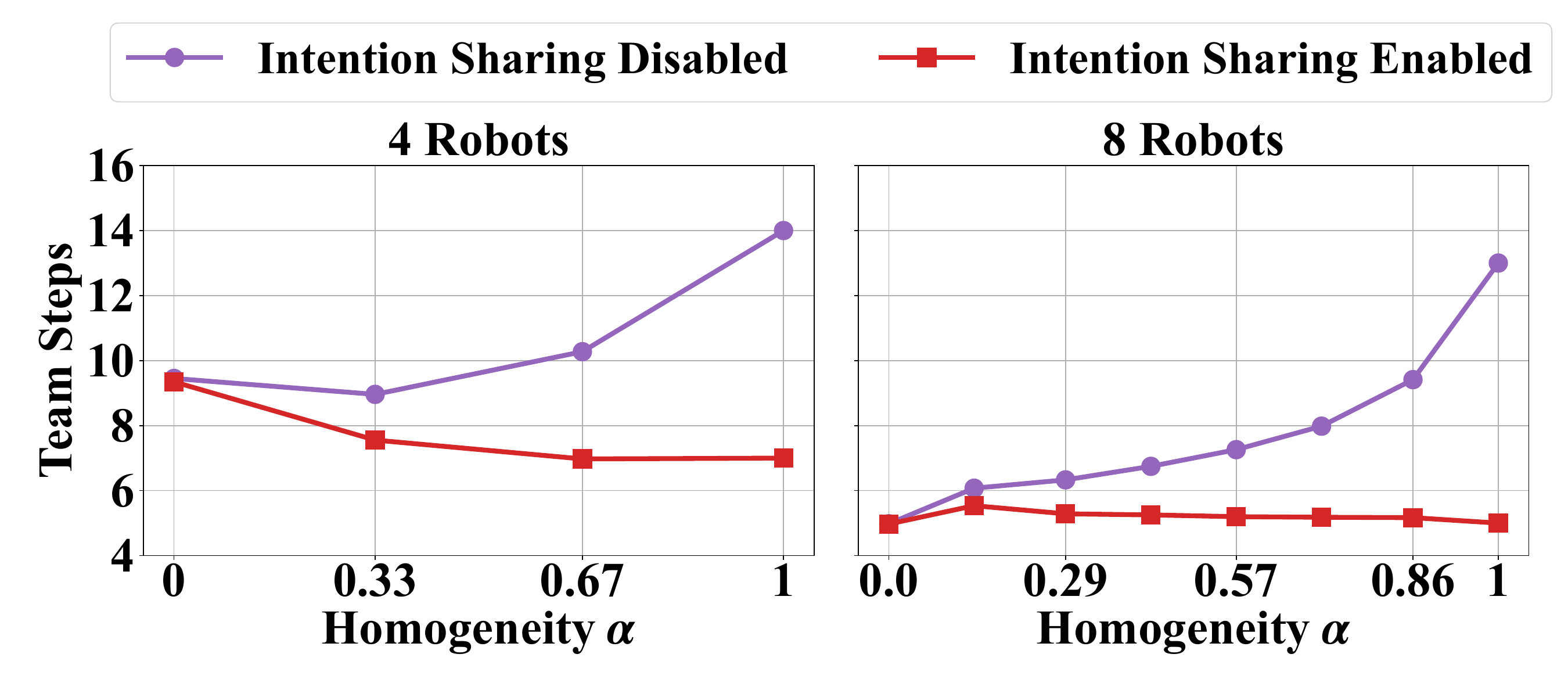}
        \caption{The impact of intention sharing on execution efficiency}
        \label{fig:subtask_chaining_results}
    \end{subfigure}
       \caption{Comparison of success rate and team steps under different conditions for 4 and 8 robots. Each data point represents the average of 500 trials.}
    \label{fig:combined_results}
\end{figure*}

\subsubsection{Prompt and Feedback for LLMs}
Appropriate subtrees for each robot can significantly improve the efficiency of planning and execution. However, obtaining these subtrees before planning is a very tricky task. Fortunately, pre-trained LLMs have been proven to possess task reasoning capabilities \cite{liu2023llm,song2023llmplanner}. When such models are available, subtree pre-planning can be a highly effective plugin for MRBTP.

To obtain suitable action sequences for each robot from pre-trained LLMs, the model's prompt should include: (1) task information, including the initial environment state and goal; (2) objects in the environment; (3) the action space for each robot; (4) few-shot demonstrations; and (5) a system prompt to guide the model to output correctly. 

After the LLM produces an output, we designed a checker to automatically verify it. We provide feedback to the LLM in three cases: (1) the output has grammar errors; (2) the action sequences cannot be pre-planned into a subtree, meaning they are not coherent; (3) the number of action sequences generated for each robot is insufficient. Once the output is good enough or the maximum number of feedback attempts has been reached, we begin subtree pre-planning. After pre-planning is completed, we add each subtree to the action spaces of all robots that contain all of the actions in the subtree, to fully utilize the subtrees.

\section{Experiments}

We evaluate the performance of MRBTP in two simulated scenarios: (1) Warehouse management with coarse action granularity and a smaller action space, and (2) Home service with finer granularity and a larger action space. First, we assess the robustness of the MRBTP method under varying levels of homogeneity by introducing a failure probability for each action. Next, we conduct an ablation study on intention sharing to verify its contribution to the execution efficiency of multi-robot BTs. Then, given the finer action granularity in the home service scenario, we perform an ablation study to evaluate subtree pre-planning, examining the effectiveness of pre-trained LLMs in generating task-related action sequences and their impact on the overall efficiency of the MRBTP. All experiments were conducted on a system equipped with an AMD Ryzen 9 5900X 12-core processor with a 3.70 GHz base clock and 128 GB of DDR4 RAM.

\subsection{Experimental Setup}

\subsubsection{Scenarios} (a) Warehouse Management. We extend the Minigrid \cite{chevalier-boisvert2023minigrid} environment for multi-robot simulations with 4-8 robots in 4 rooms containing randomly placed packages. Robots have diverse action spaces, including room inspection and package relocation, with some possessing specialized capabilities or restricted access. The goal is to optimize warehouse space utilization. (b) Home Service. In the VirtualHome \cite{puig2018virtualhome} environment, 2-4 robots interact with dozens of objects and perform hundreds of potential actions. Each robot’s action space is diverse, aiming to complete complex household tasks, such as setting the table or preparing a meal.

\subsubsection{Evaluation Metrics}
The algorithm's performance was evaluated using the following metrics: (a) Success Rate (SR): The percentage of successfully completed tasks across multiple trials, accounting for action failure probabilities. (b) Team Steps (TS): The total number of steps required for all robots to complete their tasks in parallel. (c) Total Robot Steps (RS): The sum of steps taken by each robot independently. (d) Communication Overhead (Comm.): The number of broadcast communications between robots due to intention sharing. (e) Number of Expanded Conditions (EC): The number of condition nodes expanded during the multi-robot BTs planning process, including those from subtree pre-planning if available. (f) Planning Time (PT): The time taken for multi-robot BT planning, including subtree pre-planning when available.

\subsubsection{Settings}
(a)  Homogeneity ($\boldsymbol{\alpha}$): The proportion of redundant actions assigned to robots, where $\boldsymbol{\alpha} = 1$ denotes complete heterogeneity (no overlap in action spaces) and $\boldsymbol{\alpha} = 0$ denotes complete homogeneity (identical action spaces).
(b) Action Failure Probability (FP): The probability that a robot fails to execute an action.
(c) Subtree Intention Sharing (Subtree IS) and Atomic Action Nodes Intention Sharing (Atomic IS): These terms refer to the application of Intention Sharing either among subtrees or at the level of individual atomic action nodes.
(d) Feedback (F) and No Feedback (NF): This setting distinguishes between LLMs that use feedback during subtree generation and those that do not. In the Feedback condition, the LLM receives up to 3 feedback iterations, while in the No Feedback condition, no feedback is provided.

\begin{table}[t]
\centering
\small
\setlength{\tabcolsep}{3pt}  
\begin{tabular}{c ccc c c c c}
\toprule
\multirow{2}{*}{\textbf{Method}} & \multicolumn{3}{c}{\textbf{$\boldsymbol{\alpha = 1}$ }} & & \textbf{\textbf{$\boldsymbol{\alpha \approx 0.5}$ }} & & \textbf{\textbf{$\boldsymbol{\alpha = 0}$ }} \\
\cline{2-4} \cline{6-6} \cline{8-8}  
\\[-1.5ex] 
&  \textbf{SR(\%)} & \textbf{TS} & \textbf{RS} & &  \textbf{SR(\%)} & &  \textbf{SR(\%)} \\
\midrule
\textbf{BT-Expansion} & 100 & 8.8 & 33.8 & & 12.4 & & 4.6 \\
\textbf{MRBTP} & 100 & 5.8 & 15.3 & & 100 & & 100 \\
\bottomrule
\end{tabular}
\caption{Performance comparison with baseline in warehouse management (4 robots, averaged over 500 trials).}
\label{tab:comparison_with_baseline}
\end{table}

\begin{table*}[t]
    \centering
    \small
    \setlength{\tabcolsep}{2pt} 
    \begin{tabular}{cc |cc cccc| cc cccc | cc cccc}
        \toprule
        \multicolumn{2}{c}{\textbf{Homogeneity}} & \multicolumn{6}{c}{\textbf{$\boldsymbol{\alpha = 1}$ }} & \multicolumn{6}{c}{\textbf{$\boldsymbol{\alpha \approx 0.5 } $}} & \multicolumn{6}{c}{\textbf{$\boldsymbol{\alpha=0}$}} \\

       \midrule 
      \multicolumn{2}{c|}{\textbf{Subtree}}   & - & - & \ding{51} &\ding{51} &\ding{51} &\ding{51} & - & - &  \ding{51} &\ding{51} &\ding{51} &\ding{51}  &- & -&  \ding{51} &\ding{51} &\ding{51} &\ding{51}  \\

        \multicolumn{2}{c|}{\textbf{Subtree IS}}          &- & - & -&-  & \ding{51} & \ding{51} &- & - & -&- & \ding{51} & \ding{51} &- & - &- &- & \ding{51} & \ding{51} \\ 
        \multicolumn{2}{c|}{\textbf{Atomic IS}}           &  - & \ding{51} & -  & \ding{51} & - & \ding{51}  & - &  \ding{51} & - & \ding{51} & - & \ding{51} & - & \ding{51} &- &\ding{51}  & -& \ding{51} \\

        \midrule
        \multirow{2}{*}{\textbf{TS}}   &  \textbf{NF}    & 161 & 159.4 & 114.4 & 109.6 & 78.9 & 79.8 & 139.7 & 137.5 & 126.2 & 119.6 & 86.9 & 102.5 & 73.7 & 68.5 & 96.1 & 94.8 & 75.6 & 78.0 \\ 
        
         &  \textbf{F} & - & - & 116.7 & 114.2 & \textbf{77.1} & 79.16 & - & - & 124.6 & 126.0 & \textbf{80.8} & 96.2 & - & - & 107.1 & 106.4 & \textbf{70.4} & 76.8 \\  
        \midrule

        \multirow{2}{*}{\textbf{RS}}   &  \textbf{NF}      & 570.8 & 557.3 & 374 & 359.4 & 217.4 & 219.1 & 385.2 & 380.1 & 345.8 & 326.6 & 209.6 & 222 & 128.6 & 128.6 & 128.6 & 128.6 & 128.6 & 128.6 \\ 
        &  \textbf{F}  & - & - & 377 & 370.9 & \textbf{205.2} & 208 & - & - & 380.7 & 348.2 & \textbf{192.2} & 209 & - & - & 128.6 & 128.6 & 128.6 & 128.6 \\ 

        \midrule
        \multirow{2}{*}{\textbf{Comm.}}  &  \textbf{NF}  & 0.0 & 63.8 & 0.0 & 7.1 & 6.7 & 14.1 & 0.0 & 43.5 & 0.0 & 8.0 & 7.1 & 20.9 & 0.0 & 15.2 & 0.0 & 2.8 & 6.5 & 9.3 \\ 
         & \textbf{F} & - & - & 0.0 & 4.8 & \textbf{6.6} & 12.2 & - & - & 0.0 & 4.4 & \textbf{6.4} & 13.2 & - & - & 0.0 & 0.7 & \textbf{5.2} & 6.0 \\

    \bottomrule
    \end{tabular}
    \caption{Execution efficiency with subtree pre-planning and intention sharing.}
    \label{tab:llm_sutree_result}
\end{table*}

\begin{table}[ht]
\centering
\small
\setlength{\tabcolsep}{5.5pt}  
\renewcommand{\arraystretch}{0.9} 
\begin{tabular}{ccccc}
\toprule
\textbf{Homogeneity} & \textbf{Subtree} & \textbf{Feedback} & \textbf{EC}   & \textbf{PT (s)}  \\ \midrule

\multirow{3}{*}{ \textbf{$\boldsymbol{\alpha = 1}$ } } & - & - & 8033.3  &  Timeout \\
 & \ding{51} & - & 998.1  & 12.4 \\
  & \ding{51} & \ding{51}& \textbf{384.3}  & \textbf{3.7} \\
\midrule

\multirow{3}{*}{\textbf{$\boldsymbol{\alpha \approx 0.5 } $} } & - & - & 7882.5 &  Timeout \\
 & \ding{51} & - &  623.8 & 7.2 \\ 
  & \ding{51} &\ding{51} & \textbf{267.9} & \textbf{2.6} \\ 

\midrule
\multirow{3}{*}{\textbf{$\boldsymbol{\alpha=0}$}} &- & - & 2695.5 & 20.2 \\
& \ding{51} & - &  576.6  & 5.6\\  
& \ding{51} & \ding{51} &  \textbf{146.8}  & \textbf{1.4}\\ 

\bottomrule
\end{tabular}
\caption{Planning efficiency with pre-planned subtrees. The average response time per LLM invocation is 4.2 seconds.}
\label{tab:planning_time}
\end{table}

\subsubsection{Baselines}

BT planning algorithms typically utilize action models for planning. To ensure consistency under the same problem assumptions, we propose directly adapting the BT-Expansion \cite{cai2021bt} algorithm, which has been proven sound and complete in single-robot settings, to multi-robot scenarios as our baseline. In BT-Expansion, each robot independently performs backward planning towards the team's goal, without incorporating cross-tree expansion or intention sharing.

\subsection{Experimental Results}

\subsubsection{Performance Comparison}
 
We randomly generated solvable multi-robot BT planning problems under various settings. Table \ref{tab:comparison_with_baseline} shows a significant drop in BT-Expansion's success rate as homogeneity decreases. In contrast, MRBTP maintains a perfect success rate of 100\% across all settings due to its cross-tree expansion. To avoid bias in execution efficiency (TS, RS) caused by planning failures, we only compared cases where both algorithms succeeded. Notably, even under full homogeneity, MRBTP outperforms BT-Expansion in execution efficiency due to intention sharing.

\subsubsection{Robustness}
As shown in Figure \ref{fig:robust_results}, the robustness of our algorithm improves with increasing homogeneity and is further enhanced by a larger number of robots. This improvement results from the increased likelihood of other robots compensating for action failures. Even with a 50\% failure probability per action, the system retains approximately a 50\% chance of achieving the goal with 8 robots and complete action space homogeneity.

\subsubsection{Execution Efficiency}

As shown in Figure \ref{fig:subtask_chaining_results}, in fully heterogeneous scenarios, enabling intention sharing results in fewer team steps, indicating that our MRBTP algorithm inherently maintains superior execution efficiency under these conditions. Additionally, as homogeneity increases, the likelihood of robots performing redundant actions rises, reducing the probability of parallel task execution. However, with intention sharing, redundant actions are significantly minimized, preventing further efficiency loss. In this context, increased homogeneity brings more backup structures, further improving execution efficiency.

\subsubsection{Effectiveness of Task-Specific Subtree Pre-Planning}
We constructed a dataset of 75 instances across three levels of homogeneity. The model used to generate subtrees is \textit{gpt-4o-mini-2024-07-18} \cite{openai2023gpt4}. Table \ref{tab:llm_sutree_result} provides a comparative analysis of the impact of introducing task-specific subtrees on execution efficiency, and communication overhead. The improvement in execution efficiency brought about by intention sharing increases with higher homogeneity. Execution efficiency is highest when subtrees are combined with intention sharing. During execution, the long-horizon subtrees facilitate forward planning, leading to more efficient and less frequent communication compared to finer-grained atomic actions. Table \ref{tab:planning_time} shows that subtree pre-planning significantly reduces BTs planning time under a 60-second constraint by minimizing redundancy through subtree reuse and similar robot action spaces. Additionally, both Table \ref{tab:llm_sutree_result} and Table \ref{tab:planning_time} demonstrate that feedback effectively enhances both planning and execution efficiency, especially when integrated with the sharing of the subtree and intentions.


\subsubsection{Execution Efficiency across Different LLMs}
We tested different versions of LLMs, including \textit{gpt-3.5-turbo (2024.12)} and \textit{gpt-4o-2024-08-06} \cite{openai2023gpt4}, for assisting in subtree pre-planning. As shown in Table \ref{tab:different_llms}, with the increased reasoning capability of the LLMs, there is a slight improvement in execution efficiency, while communication overhead remains largely unchanged. This can be attributed to the fact that subtree pre-planning becomes more appropriate and effective as the model's reasoning ability improves. Additionally, the results further demonstrate that the feedback mechanism enhances execution efficiency across all LLMs.

\begin{table}[t]
\centering
\small
\setlength{\tabcolsep}{3pt}  
\begin{tabular}{c ccc c ccc}
\toprule
\multirow{2}{*}{\textbf{Models}} & \multicolumn{3}{c}{\textbf{No Feedback}} &  & \multicolumn{3}{c}{\textbf{Feedback}} \\
\cline{2-4} \cline{6-8}
\\[-1.5ex] 
    & \textbf{TS} & \textbf{RS}  & \textbf{Comm.} & & \textbf{TS} & \textbf{RS}  & \textbf{Comm.}  \\
\midrule
\textbf{GPT-3.5-turbo}   & 81.6   & 223.6   & 5.1   & & 80.0   & 219.0   & 5.1   \\
\textbf{GPT-4o-mini}    & 78.9   & 217.4   & 6.7   & & 77.1   & 205.2   & 6.6   \\
\textbf{GPT-4o}    & 77.4   & 200.9   & 6.3   & & 74.9   & 190.7   & 6.3   \\ 
\bottomrule
\end{tabular}
\caption{Execution efficiency across different LLMs under $\boldsymbol{\alpha = 1}$ with subtree and subtree intention sharing.}
\label{tab:different_llms}
\end{table}

\section{Related Work}

\paragraph{BT Planning.}
Many works have focused on automatically generating BTs to perform tasks, such as evolutionary computing \cite{neupane2019learning,colledanchise2019learning,lim2010evolving}, reinforcement learning \cite{banerjee2018autonomous,pereira2015framework}, imitation learning \cite{french2019learning}, MCTS \cite{scheide2021behavior}, and formal synthesis \cite{li2021reactive,tadewos2022specificationguided,neupane2023designing}. Recently, some works directly generate BTs using LLMs \cite{lykov2023llmbrain,lykov2023llmmars}. However, the above methods either require complex environment modeling or cannot guarantee the reliability of BTs. In contrast, BT planning \cite{cai2021bt,chen2024integrating} based on STRIPS-style modeling \cite{fikes1971strips} not only offers intuitive environment modeling but also ensures the reliability and robustness of the generated BTs.

\paragraph{BT in MRS.}
BT generation for Multi-Robot Systems (MRS) has been investigated using various methodologies. Evolutionary computing \cite{neupane2019learning} is a general heuristic search method applied to BT generation in MRS. While versatile, this approach often suffers from slow search efficiency due to its lack of integration with the action model. Given the modular nature of BT systems, the action model is not difficult to obtain \cite{arora2018review}, enabling the development of methods that can yield more efficient solutions. MRS BT generation methods based on LLMs \cite{lykov2023llmmars} or other machine learning techniques \cite{fu2016reinforcement} have also been explored. These methods require substantial training data, making data collection and model training resource-intensive. Moreover, the aforementioned methods lack guarantees for the completeness and correctness of the generated BTs. Auction-based methods \cite{dahlquist2023reactive,heppner2024behavior,colledanchise2016advantages}, some of which incorporate action model planning, rely on the assumption of reliable communication and low transmission delay to ensure efficient task completion. However, such conditions are not always guaranteed, rendering these approaches less robust in environments with unreliable communication. In contrast, our method generates BTs before the robot team begins execution, ensuring task completion even in the absence of communication during execution. Communication during execution serves only to improve coordination efficiency, rather than being a necessary assumption.

\paragraph{LLM for Task Reasoning.} Recently, significant progress has been made in using LLMs for task reasoning \cite{song2023llmplanner,liu2023llm,ahn2022can,chen2023robogpt}, such as progprompt \cite{singh2022progprompt}, PlanBench \cite{valmeekam2023planbench}, and Voyager \cite{wang2023voyager}. Furthermore, the LLM has shown the ability to decompose the task into subgoals \cite{gao2024dagplan,singh2024twostep}, which is closely related to our subtree pre-planning for multi-robot BT planning. As the task reasoning abilities of LLMs continue to evolve and strengthen, our subtree pre-planning technique is poised to become increasingly relevant and effective.

\section{Conclusion}

We propose MRBTP, the first sound and complete algorithm for solving the multi-robot BT planning problem. The cross-tree expansion coordinates BTs for achieving goals, while intention sharing improves execution efficiency and robustness. The LLM plugin further enhances planning speed and reduces communication overhead. These contributions represent a key step forward in scalable, reliable multi-robot systems. Future research will refine the algorithm's performance and extend its application to more complex, dynamic environments, solidifying MRBTP as a foundational approach in multi-robot planning. Furthermore, the potential deployment of the algorithm on actual robotic systems will be explored, evaluating its effectiveness, scalability, and practicality in real-world scenarios.

\section{Acknowledgments}
This work was supported by the National Natural Science
Foundation of China (Grant Nos. 62106278, 62032024). 

\bigskip

\bibliography{aaai25}

\newpage
\newpage
\onecolumn
\appendix

\section{Appendix}

\section{A.Proofs of MRBTP}
In this section, we present a comprehensive version of the formalizations and proofs for Multi-Robot Behavior Tree Planning (MRBTP).

\subsection{Definitions}

\begin{definition}[Behavior Tree]
A behavior tree (BT) is a three-tuple $\mathcal{T} = <f, r, \Delta t>$. $f:2^{n}\rightarrow 2^{n}$ is its effect on the environment state, $\Delta t$ is the time step, and $r:2^{n}\mapsto \{ $\constant{S}, \constant{R}, \constant{F}\} partitions states into three regions, where $\mathcal{T}$ returns success, running, failure, respectively.
\end{definition}

In BT planning for a single robot \cite{cai2021bt}, we represent the problem as: \(<\mathcal{S},\mathcal{L},\mathcal{A},\mathcal{M}, s_0,g>\), where \( \mathcal{S} \) is the finite set of environment states, $\mathcal{L}$ is the finite set of literals that form states, \( \mathcal{A} \) is the finite set of actions, $\mathcal{M}$ is the action model, $s_0$  is the initial state, $g$ is the goal condition.

A condition $c$ in BT is usually a subset of a state $s$. If $c\subseteq s$, it is said condition $c$ holds in that state $s$. The state transition affected by action $a\in \mathcal{A}$ can be defined as a triplet \( \mathcal{M}(a)=<pre(a),add(a),del(a)> \), comprising the precondition, add effects, and delete effects of the action. We assume that an action always finishes in finite time.  If $a$ is finished after $k$ time step, the subsequent state $s_{t'}$ will be:
\begin{equation}\label{eqn-s_f}
	s_{t'}=f_a(s_t)=s_t\cup add(a)\setminus del(a), t'=t + k
\end{equation}
The following property holds for $\forall a \in \mathcal{A}$:
\begin{align}
	add(a) \cap del(a) = \emptyset \label{eqn:a_d}\\
	add(a) \cap pre(a) = \emptyset \label{eqn:a_p}
\end{align}

We then extend the BT representation from a single robot to a multi-robot system.

\begin{definition}[Multi-BT System]
A $n$-robot BT system is a four-tuple $\left<\Phi, f_\Phi, r_\Phi, \Delta t_\Phi\right>$, where $\Phi = \left\{ \mathcal{T}_i \right\}_{i=1}^n$ is the set of BTs, $f_\Phi: \mathcal{S} \mapsto \mathcal{S}$ is the team state transition function, $\Delta t_\Phi$ is the team time step, $r_\Phi: \mathcal{S} \mapsto \{$ \constant{S}, \constant{R}, \constant{F} $\}$ is the team region partition.
\end{definition}
Due to variability in hardware performance, we allow each robot's BT to have a different response frequency, with $\Delta t_\Phi$ representing the common minimum response interval. The state transition can be calculated as follows:
\begin{align}
    s_{t+\Delta t_\Phi} = f_\Phi(s_t) = s_t \cup \bigcup_{i=1}^n \left( add(a_i) \setminus del(a_i) \right)
\end{align}
where $a_i$ is the action of robot $i$ in time $t$. If robot $i$ do not have an action or its action is running, we let $add(a_i) = del(a_i) = \emptyset$. 

The team region partition can be calculated as follows:
\begin{equation}\label{equ-multiregion}
    r_\Phi(s) = 
\begin{cases}
    \text{\constant{R}} & \text{if } \exists i, r_i(s) = \text{\constant{R}} \\
    \text{\constant{S}} & \text{if } \forall i, r_i(s) \neq \text{\constant{R}} \text{ and } \exists i, r_i(s) = \text{\constant{S}} \\
    \text{\constant{F}} & \text{if } \forall i, r_i(s) = \text{\constant{F}}
\end{cases}
\end{equation}
The status of $\Phi$ is \constant{R} if any BT is still running, \constant{S} if some BT returns success and no one is running, and \constant{F} if all BT fails.
\begin{definition}[Finite Time Successful]
$\Phi$ is finite time successful (FTS) from region of attraction (ROA) $R$ to condition $c$, if $\forall s_0 \in R$  there is 
 a finite time $\tau$ such that for any $t<\tau$, $r_\Phi(s_t)=$ \constant{R}, and for any $t\geq\tau, r_\Phi(s_t)=$ \constant{S}, $c\subseteq s_t$.
\end{definition}
With definitions above, the multi-robot BT planning problem can finally be defined.

\begin{problem}[Multi-Robot BT Planning]\label{problem-MRBTP}
The problem is a tuple \(\left<\mathcal{S},\mathcal{L},\{\mathcal{A}_i\}_{i=1}^n,\mathcal{M}, s_0,g\right>\), where \( \mathcal{S} \) is the finite set of environment states, $\mathcal{L}$ is the finite set literals that form states and conditions, $\mathcal{A}_i$ is the finite action set of robot $i$, $\mathcal{M}$ is the action model, $s_0$  is the initial state, $g$ is the goal condition. A solution to this problem is a BT set $\Phi = \{\mathcal{T}_i\}_{i=1}^n$ built with $\{\mathcal{A}_i\}_{i=1}^n$, such that $\Phi$ is FTS from $R\ni s_0$ to $g$.

\end{problem}


\begin{algorithm}[t]
\caption{One-step cross-tree expansion}
\label{alg-one_step}
\begin{algorithmic}[1]
\State \textbf{function} \constant{ExpandOneRobot(}$\mathcal{T},\mathcal{A}, c$\constant{)}
\begin{ALC@g}
\StateComment{$\mathcal{T}_{new} \leftarrow c$}{newly expanded subtree}
\StateComment{$\mathcal{C}_{new} \gets \emptyset$}{newly expanded conditions}
\FOR{\textbf{each} action $a \in \mathcal{A}$}
    \IF{$c \cap (pre(a) \cup add(a) \setminus del(a)) \neq \emptyset$ and $c \setminus del(a) = c$}  \label{line-ifexpanded}
    \STATE $c_a \leftarrow pre(a) \cup c \setminus add(a)$ \label{line-ca}
    \STATE $\mathcal{T}_a \leftarrow Sequence(c_a, a)$ \label{line-ta}
    \STATE $\mathcal{T}_{new} \leftarrow Fallback(\mathcal{T}_{new}, \mathcal{T}_a)$ \label{line-tnew}
    \State{$\mathcal{C}_{new} \gets \mathcal{C}_{new} \cup \{c_a\}$}
    \ENDIF
\ENDFOR
\IF{$\mathcal{C}_{new}\neq \emptyset$}
    \IF{\constant{ConditionInTree}($c$, $\mathcal{T}$)}
        \StateComment{Replace $c$ with $\mathcal{T}_{new}$ in $\mathcal{T}$}{in-tree expand} \label{InsideExpand}
    \ELSIF{$\mathcal{T}_{new} \neq c$} 
        \StateComment{$\mathcal{T} \gets Fallback(\mathcal{T},\mathcal{T}_{new}) $}{cross-tree expand} \label{line-outside}
    \ENDIF
\ENDIF
\end{ALC@g}
\RETURN $\mathcal{T}, \mathcal{C}_{new}$
\end{algorithmic}
\end{algorithm}

\subsection{Propositions and Proofs}

\begin{lemma}\label{lem-ta}
Given a condition $c$, the sequence structure $\mathcal{T}_a = sequence(c_a,a)$ expanded in Algorithm \ref{alg-one_step} (line \ref{line-ta}) is FTS from $\mathcal{S}_a=\{s\in \mathcal{S} | c_a\subseteq s\}$ to $c$.
\end{lemma}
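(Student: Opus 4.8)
The plan is to fix an arbitrary $s_0 \in \mathcal{S}_a$, so that $c_a \subseteq s_0$, and to trace the ticks of $\mathcal{T}_a = Sequence(c_a,a)$ under the single-robot semantics, reducing the FTS claim to two purely set-theoretic facts: (i) the precondition of $a$ holds at $s_0$, so the sequence actually ticks $a$ and $a$ executes; and (ii) the post-execution state $f_a(s_0)$ contains $c$, so that the target condition is reached. The only hypotheses needed are the STRIPS identities $add(a)\cap del(a)=\emptyset$ \eqref{eqn:a_d} and $add(a)\cap pre(a)=\emptyset$ \eqref{eqn:a_p}, together with the guard on line \ref{line-ifexpanded} of Algorithm \ref{alg-one_step}, whose second clause $c\setminus del(a)=c$ is exactly $c\cap del(a)=\emptyset$; the first clause of that guard (which merely forces $a$ to contribute something to $c$) plays no role in this lemma.

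The heart of the argument is elementary set algebra around the definition $c_a=(pre(a)\cup c)\setminus add(a)$ on line \ref{line-ca}. For (i): since $add(a)\cap pre(a)=\emptyset$ we have $pre(a)\setminus add(a)=pre(a)$, hence $pre(a)\subseteq c_a\subseteq s_0$, so the left child $c_a$ of the sequence returns \constant{S} at $s_0$ and $a$ is then ticked with its precondition satisfied. For (ii): from $c\setminus add(a)\subseteq(pre(a)\cup c)\setminus add(a)=c_a\subseteq s_0$ we get $c\subseteq s_0\cup add(a)$; combining this with $c\cap del(a)=\emptyset$ and $add(a)\cap del(a)=\emptyset$ yields $c=c\setminus del(a)\subseteq(s_0\cup add(a))\setminus del(a)=(s_0\setminus del(a))\cup add(a)=f_a(s_0)$. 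I expect this chain of inclusions to be the main obstacle — not because it is deep, but because getting the precedence of $\cup$ and $\setminus$ and the cancellation between $add(a)$ and $del(a)$ exactly right is where such arguments typically slip.

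Given (i) and (ii), I would close with the execution bookkeeping. While $a$ has not yet terminated, the only thing that can change the state is $a$ itself (we treat $\mathcal{T}_a$ as a standalone BT), so the environment stays at $s_0$; on each tick $c_a$ returns \constant{S} and $a$ returns \constant{R}, hence $\mathcal{T}_a$ returns \constant{R}. By the standing assumption that every action finishes in finite time, there is a time $\tau$ at which $a$ completes and the state becomes $s_\tau=f_a(s_0)$; for $t\geq\tau$ nothing further executes, so the state is stationary at $s_\tau$ and, by (ii), $c\subseteq s_\tau=s_t$. It remains to verify $r_{\mathcal{T}_a}(s_\tau)=$ \constant{S}: the effect of $a$ is realized at $s_\tau$ (indeed $add(a)\subseteq s_\tau$ and $del(a)\cap s_\tau=\emptyset$ by \eqref{eqn:a_d}), and $c_a$ still holds there, so the sequence returns \constant{S}; taking $\tau$ as the witness time then gives precisely the FTS property with region of attraction $\mathcal{S}_a$ and target $c$. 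The one point I would slow down on is the last ``$c_a$ still holds at $s_\tau$'' step, since it is the only place where $a$'s delete effects could interact with the literals of $c_a$: under the usual assumption that an action does not delete its own precondition it is immediate from $c_a\subseteq s_0$ and $c_a\cap del(a)=\emptyset$, and otherwise one appeals to the enclosing fallback, whose earlier branch tests $c$ directly and is therefore never re-entered once $c$ holds.
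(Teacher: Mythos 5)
Your proof is correct and follows essentially the same route as the paper's: both reduce the claim to the set-theoretic computation that $c_a\subseteq s_0$ together with the guard $c\setminus del(a)=c$ forces $c\subseteq f_a(s_0)=s_0\cup add(a)\setminus del(a)$. You are in fact somewhat more careful than the paper on two points it glosses over — verifying $pre(a)\subseteq c_a$ so the action is ticked with its precondition satisfied, and noting that $c_a$ itself may be falsified by $del(a)$ at the terminal state so that the success return must really be attributed to the enclosing fallback's leading $c$ branch.
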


\begin{proof}
Starting from any $s_t\in \mathcal{S}_{a}=\{s\in \mathcal{S}| c_a \subseteq s\}$, $\exists c_a, c_a\subseteq s_t$. According to Equation \ref{eqn-s_f}, there exists a finite $k$ such that the action returns success and $s_{t+k}=s_t\cup add(a) \setminus del(a)\supseteq c_a \cup add(a) \setminus del(a)=pre(a)\cup c \setminus del(a)$. Since action selection (line \ref{line-ifexpanded}) ensures that $c\setminus del(a) = c$, we have $s_{t+k}\supseteq pre(a)\cup c \supseteq c$. Therefore $\mathcal{T}_a$ is FTS from $\mathcal{S}_{a}$ to $c$.
\end{proof}

\begin{lemma}\label{lem-tnew}
Given a condition $c$ and $\mathcal{C}_{new}\neq \emptyset$, $\mathcal{T}_{new}$ expanded in Algorithm \ref{alg-one_step} (line \ref{line-tnew}) is FTS from $\mathcal{S}_{new}=\{s\in \mathcal{S} | c_a\subseteq s, c_a\in \mathcal{C}_{new}\}$ to $c$.
\end{lemma}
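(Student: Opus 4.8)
The plan is to prove Lemma~\ref{lem-tnew} by induction on the number of sequence structures appended to the root fallback node of $\mathcal{T}_{new}$, using Lemma~\ref{lem-ta} as the base building block and a standard ``fallback composition'' argument for the inductive step. Recall that $\mathcal{T}_{new}$ starts as the bare condition $c$ and is repeatedly updated via $\mathcal{T}_{new} \leftarrow Fallback(\mathcal{T}_{new}, \mathcal{T}_a)$ (line~\ref{line-tnew}), where each $\mathcal{T}_a = Sequence(c_a, a)$ with $c_a \in \mathcal{C}_{new}$. So the final $\mathcal{T}_{new}$ has the form $Fallback(c, \mathcal{T}_{a_1}, \mathcal{T}_{a_2}, \dots, \mathcal{T}_{a_p})$ where $\{c_{a_1}, \dots, c_{a_p}\} = \mathcal{C}_{new}$ (and $p \geq 1$ since $\mathcal{C}_{new} \neq \emptyset$).

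First I would establish the key composition fact: if $\mathcal{T}_L$ is FTS from $R_L$ to $c$ and $\mathcal{T}_R$ is FTS from $R_R$ to $c$ (same target condition $c$), then $Fallback(\mathcal{T}_L, \mathcal{T}_R)$ is FTS from $R_L \cup R_R$ to $c$. The argument: starting from $s_0 \in R_L \cup R_R$, the fallback ticks $\mathcal{T}_L$ first; if $s_0 \in R_L$, then $\mathcal{T}_L$ drives the state to satisfy $c$ in finite time while returning \constant{R} in the interim, and the fallback returns whatever $\mathcal{T}_L$ returns, so we are done. If $s_0 \notin R_L$ but $s_0 \in R_R$, then since $\mathcal{T}_L$ does not return \constant{R} forever (this is where one needs that being outside the ROA and not returning \constant{R} means returning \constant{F}, or more carefully, that the relevant condition checks/actions terminate) the fallback proceeds to tick $\mathcal{T}_R$, which then drives the state to satisfy $c$. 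One subtlety to handle cleanly: once $c$ holds, the leftmost child (the bare condition $c$ in the base, or an already-succeeding branch) returns \constant{S} and the tree stabilizes at \constant{S} with $c \subseteq s_t$ — this matches the FTS definition. I would note that the bare condition $c$ alone is trivially FTS from $\{s \in \mathcal{S} \mid c \subseteq s\}$ to $c$, which anchors the induction.

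Then the induction is routine: the base case is $\mathcal{T}_{new}$ after the first append, $Fallback(c, \mathcal{T}_{a_1})$, which by the composition fact and Lemma~\ref{lem-ta} is FTS from $\{s \mid c \subseteq s\} \cup \mathcal{S}_{a_1}$ to $c$; since $c \subseteq s$ implies $c_{a_1} \subseteq s$ is not needed — rather $\{s \mid c\subseteq s\} \subseteq \mathcal{S}_{new}$ trivially when we only care that the union over all $c_a \in \mathcal{C}_{new}$ of $\mathcal{S}_{a}$ is covered, and $\{s\mid c \subseteq s\}$ is harmless extra ROA. For the inductive step, assuming $Fallback(c, \mathcal{T}_{a_1}, \dots, \mathcal{T}_{a_{j}})$ is FTS from $\{s\mid c\subseteq s\} \cup \bigcup_{k\le j}\mathcal{S}_{a_k}$ to $c$, appending $\mathcal{T}_{a_{j+1}}$ and applying the composition fact together with Lemma~\ref{lem-ta} (which gives $\mathcal{T}_{a_{j+1}}$ FTS from $\mathcal{S}_{a_{j+1}}$ to $c$) yields FTS from the union with $\mathcal{S}_{a_{j+1}}$ added. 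After all $p$ appends, the ROA is $\{s \mid c \subseteq s\} \cup \bigcup_{k=1}^{p} \mathcal{S}_{a_k} \supseteq \bigcup_{k=1}^p \{s \mid c_{a_k} \subseteq s\} = \{s \in \mathcal{S} \mid c_a \subseteq s,\, c_a \in \mathcal{C}_{new}\} = \mathcal{S}_{new}$, so $\mathcal{T}_{new}$ is FTS from $\mathcal{S}_{new}$ to $c$ as claimed.

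I expect the main obstacle to be the careful statement and proof of the fallback composition fact — specifically handling the ticking semantics so that an initial state lying in $R_R$ but not $R_L$ genuinely ``falls through'' the left child to reach the right child, which requires arguing that the left subtree does not get stuck returning \constant{R} indefinitely from such a state. This relies on the termination assumptions for conditions and actions and on the precise meaning of the region partition $r$; once that is pinned down the rest is bookkeeping. A secondary point to be careful about is that the ROA claimed in the lemma, $\mathcal{S}_{new}$, is exactly the union of the $\mathcal{S}_{a}$ over $c_a \in \mathcal{C}_{new}$, and that the extra states where merely $c \subseteq s$ (contributed by the leftmost child) do not need to be separately accounted for — they are subsumed or simply extra, and in any case every $c_a = pre(a) \cup c \setminus add(a)$ so $c_a \subseteq s \Rightarrow$ the branch $\mathcal{T}_a$ handles $s$, giving the containment $\mathcal{S}_{new} \subseteq$ (claimed ROA).
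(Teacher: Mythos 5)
Your proof is correct and follows essentially the same route as the paper's: apply Lemma~\ref{lem-ta} to each branch $\mathcal{T}_a$ and take the union of their ROAs across the (flattened) root fallback node. The paper compresses your explicit induction and fallback-composition lemma into the single observation that the fallback succeeds as soon as any child does, whereas you additionally spell out the fall-through semantics for states outside a left child's ROA; both arguments reach the same conclusion.
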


\begin{proof}
Since $\mathcal{C}_{new}\neq\emptyset$, $\mathcal{T}_{new}$ has and only has one fallback node as the root, given $Fallback(\mathcal{T}_1,Fallback(\mathcal{T}_2,\mathcal{T}_3)) =
Fallback(\mathcal{T}_1,\mathcal{T}_2,\mathcal{T}_3)$. The fallback node returns success as long as any child returns success. According to Lemma \ref{lem-ta}, $\forall c_a\in \mathcal{C}_{new}$, the corresponding $\mathcal{T}_a$ is FTS from $\mathcal{S}_a$ to $c$, and $\mathcal{C}_{new}$ is a finite set as $\mathcal{A}$ is a finite set.  Therefore $\mathcal{T}_{new}$ is FTS from $\mathcal{S}_{new}=\bigcup_{c_a\in \mathcal{C}_{new}} \mathcal{S}_a = \{s\in \mathcal{S} | c_a\subseteq s, c_a\in \mathcal{C}_{new}\}$ to $c$. 
\end{proof}

\begin{proposition}\label{pro-inside}
    Given $\mathcal{T}$ is FTS from $R$ to $g$, if $\mathcal{T}$ is expanded by Algorithm \ref{alg-one_step} to $\mathcal{T}'$ given $c$, $c$ is in $\mathcal{T}$ and $\mathcal{C}_{new} \neq \emptyset$, then $\mathcal{T}'$ is FTS from $R'=R\cup \{ s\in \mathcal{S}| c_a \subseteq s, c_a\in \mathcal{C}_{new} \}$ to $g$.
\end{proposition}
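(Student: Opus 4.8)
The plan is to reduce the claim to the two preparatory lemmas by analysing the single local edit that turns $\mathcal{T}$ into $\mathcal{T}'$: the condition leaf $c$ (present in $\mathcal{T}$ by hypothesis) is replaced \emph{in place} by $\mathcal{T}_{new}=Fallback(c,\mathcal{T}_{a_1},\dots,\mathcal{T}_{a_m})$. First I would record the tick-level behaviour of $\mathcal{T}_{new}$, distinguishing three regimes for a state $s$: (i) if $c\subseteq s$, then $\mathcal{T}_{new}$ returns \constant{S} and commands no action, identically to the leaf $c$; (ii) if $c\not\subseteq s$ and $c_a\not\subseteq s$ for every $c_a\in\mathcal{C}_{new}$, then $\mathcal{T}_{new}$ returns \constant{F}, again identically to the leaf $c$; (iii) otherwise $\mathcal{T}_{new}$ returns \constant{R} and, by Lemma~\ref{lem-ta}, drives the environment to a state satisfying $c$ within finitely many ticks (one action execution), using the selection guard $c\setminus del(a)=c$ from line~\ref{line-ifexpanded} to guarantee that $c$ indeed holds afterwards. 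In other words, $\mathcal{T}_{new}$ \emph{refines} the leaf $c$: its success and failure regions coincide with those of $c$, and by Lemma~\ref{lem-tnew} it is FTS to $c$ from $\mathcal{S}_{new}=\{s\mid c_a\subseteq s,\ c_a\in\mathcal{C}_{new}\}$ on the remaining states.

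Next I would prove that $\mathcal{T}'$ is FTS from $R'=R\cup\mathcal{S}_{new}$ to $g$ by cases on the initial state $s_0$. If $s_0\in\mathcal{S}_{new}$: regime (iii) applies; repeated ticks return \constant{R} (this status propagates upward through the control nodes above the replaced position), and after finitely many ticks the state satisfies $c$, which reduces to the next case. If $s_0\in R$: I would compare the tick sequences of $\mathcal{T}'$ and $\mathcal{T}$. Whenever a tick does not route through the replaced position, or routes there with $c$ true (regime (i)), or routes there with $c$ false but no $c_a$ satisfied (regime (ii)), $\mathcal{T}'$ issues exactly the same action (or succeeds) as $\mathcal{T}$, so the two runs stay synchronised and $\mathcal{T}'$ inherits the FTS guarantee of $\mathcal{T}$. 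The only divergence is regime (iii) reached from a state in which $\mathcal{T}$ would merely have failed that branch; there $\mathcal{T}'$ spends finitely many extra ticks re-establishing $c$, after which $\mathcal{T}_{new}$ returns \constant{S} and the run re-synchronises with $\mathcal{T}$'s behaviour on a state where $c$ holds.

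The main obstacle is exactly this last point: I must show that the state at which $\mathcal{T}_{new}$ hands control back — a state where $c$ holds, reached by executing some new action $a$ with $c_a\subseteq s$ — lies in the region of attraction of $\mathcal{T}$ toward $g$, so that $\mathcal{T}'$, which now behaves as $\mathcal{T}$ there, still reaches $g$. This is not immediate from $\mathcal{T}$ being FTS from $R$ alone; the way to close it is to exploit the backward-expansion structure of the trees the algorithm produces, namely that a condition leaf $c$ other than $g$ always sits inside $Sequence(Fallback(c,\dots),a_{\mathrm{par}})$ with $pre(a_{\mathrm{par}})\subseteq c$ (since $c$ was built as $pre(a_{\mathrm{par}})\cup(c_{\mathrm{par}}\setminus add(a_{\mathrm{par}}))$), so that once $c$ holds the parent action $a_{\mathrm{par}}$ fires and, by the guard used when $a_{\mathrm{par}}$ was introduced, re-establishes $c_{\mathrm{par}}$, propagating the argument up the chain to $g$ (and the case $c=g$ is trivial since then the hand-off state already contains $g$). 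I would therefore either carry an explicit structural invariant on $\mathcal{T}$ or, more cleanly, fold this proposition into the strong induction over expansion steps used for Proposition~\ref{pro-k-loop2}. The rest — that \constant{R} propagates correctly through the ancestor control nodes, and that $\mathcal{C}_{new}$, each action horizon, and the expansion depth are all finite — is routine bookkeeping.
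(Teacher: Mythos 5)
Your proposal follows essentially the same route as the paper's proof: split $R'$ into $s\in R$ (where $\mathcal{T}'$ behaves like $\mathcal{T}$ because the replaced leaf and $\mathcal{T}_{new}$ agree when $c$ holds or no $c_a$ holds) and $s\in\mathcal{S}_{new}\setminus R$ (where Lemma~\ref{lem-tnew} drives the state to one satisfying $c$ in finite time, after which the run continues as $\mathcal{T}$ would). The one place you go beyond the paper is worth noting: the paper simply asserts that the hand-off state $s_{\tau_1}\supseteq c$ lies in $R$, whereas you correctly flag that this does not follow from ``$\mathcal{T}$ is FTS from $R$'' alone and must be supplied by the structural invariant of backward expansion (every condition node $c$ of $\mathcal{T}$ satisfies $\{s\mid c\subseteq s\}\subseteq R$), or by folding the proposition into the induction of Proposition~\ref{pro-k-loop2} where that invariant is maintained; either fix is sound and is what the paper implicitly relies on. Your regime~(iii) analysis for $s\in R$ (the run may transiently divert to execute a new action and then re-synchronise) is likewise a case the paper's two-regime dichotomy quietly subsumes, and your resolution is consistent with the intended argument.
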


\begin{proof}

For any $s\in R'$, we consider two cases:

(1) $s\in R$. In this case, either $\mathcal{T}_{new}$ in the BT $\mathcal{T}'$ is not ticked ($c\nsubseteq s$), or $\mathcal{T}_{new}$ returns success ($c\subseteq s$). In either situation, the execution logic of the BT is the same as that of the original $\mathcal{T}$. Since $\mathcal{T}$ is FTS from $R$ to $g$, the expanded $\mathcal{T}'$ is also FTS from $R$ to $g$.

(2) $s\notin R$ and $s\in \mathcal{S}_{new}=\{ s\in \mathcal{S}| c_a \subseteq s, c_a\in \mathcal{C}_{new} \}$. In this case, the expended $\mathcal{T}_{new}$ in BT $\mathcal{T}'$ will be ticked. According to Lemma \ref{lem-tnew}, $\mathcal{T}_{new}$ is FTS from $\mathcal{S}_{new}$ to $c$, which means there is a finite time $\tau_1$ such that for any $t<\tau_1$, $r_{\mathcal{T}'}(s_t)=$\constant{R}, and for $t=\tau_1$, $r_{\mathcal{T}_{new}}(s_{\tau_1})=$\constant{S}, $c\subseteq s_{\tau_1}$. Since $s_{\tau_1}\in R$, there is a finite time $t=\tau_2$ such that for any $t<\tau_1+\tau_2$, 
$r_{\mathcal{T}'}(s_t)=$\constant{R}, and for any $t\geq \tau_1+\tau_2$, $r_{\mathcal{T}'}(s_t)=$\constant{S}, $g\subseteq s_t$. Therefore $\mathcal{T}'$ is FTS from $\mathcal{S}_{new}\setminus R$ to $g$.

According (1) and (2), $\mathcal{T}'$ is FTS from $R'= R\cup \{ s\in \mathcal{S}| c_a \subseteq s, c_a\in \mathcal{C}_{new} \}$ to $g$.
\end{proof}

\begin{proposition}\label{pro-outside}
    If $\mathcal{T}$ is expanded by Algorithm \ref{alg-one_step} to $\mathcal{T}'$ given $c$, $c$ is not in $\mathcal{T}$ and $\mathcal{C}_{new} \neq \emptyset$, then $\mathcal{T}'$ is FTS from $S_{new}=\{ s\in \mathcal{S}| c_a \subseteq s, c_a\in \mathcal{C}_{new} \}$ to $c$.
\end{proposition}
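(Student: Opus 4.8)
The plan is to reduce Proposition~\ref{pro-outside} to Lemma~\ref{lem-tnew} by analyzing what Algorithm~\ref{alg-one_step} does to $\mathcal{T}$ in the case where $c$ is \emph{not} in $\mathcal{T}$ (i.e., the branch at line~\ref{line-outside} is taken). In that branch, $\mathcal{T}' = Fallback(\mathcal{T}, \mathcal{T}_{new})$, and since the claim concerns being FTS \emph{to} $c$ rather than to $g$, I will argue that the behaviour of $\mathcal{T}'$ relevant to reaching $c$ is governed entirely by the $\mathcal{T}_{new}$ child.

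First I would recall from Lemma~\ref{lem-tnew} that $\mathcal{T}_{new}$ (the fallback of all the $\mathcal{T}_a = Sequence(c_a,a)$ together with the original $c$) is FTS from $\mathcal{S}_{new}=\{s\in\mathcal{S}\mid c_a\subseteq s,\ c_a\in\mathcal{C}_{new}\}$ to $c$; this uses $\mathcal{C}_{new}\neq\emptyset$ so that $\mathcal{T}_{new}$ has a single fallback root. Next I would unpack the tick semantics of the fallback node $Fallback(\mathcal{T},\mathcal{T}_{new})$: starting from any $s_0\in\mathcal{S}_{new}$, the node first ticks $\mathcal{T}$. I need to handle the possibility that $\mathcal{T}$ returns \constant{S} or \constant{R} and thereby pre-empts $\mathcal{T}_{new}$. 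The key observation is that the proposition only asserts FTS \emph{from $\mathcal{S}_{new}$ to $c$}, and the definition of FTS requires $r_{\mathcal{T}'}(s_t)=$\constant{R} until some finite $\tau$ and then \constant{S} with $c\subseteq s_t$ thereafter; if $\mathcal{T}$ returns \constant{S} immediately that is consistent with $\tau=0$ only if also $c\subseteq s_0$, which need not hold. So the cleanest route is to restrict attention to the execution trace in which $\mathcal{T}$ does \emph{not} return success before $c$ is achieved — more precisely, to argue that on $\mathcal{S}_{new}$, ticking $\mathcal{T}_{new}$ drives the state into $c$ in finite time regardless of what $\mathcal{T}$ does, because $\mathcal{T}_{new}$'s FTS guarantee (Lemma~\ref{lem-tnew}) is about the state transitions produced, and the fallback structure ensures $\mathcal{T}_{new}$ is ticked whenever $\mathcal{T}$ returns \constant{F}. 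I would therefore mirror the structure of the proof of Proposition~\ref{pro-inside}, case~(2): invoke Lemma~\ref{lem-tnew} to get a finite $\tau$ with $r_{\mathcal{T}_{new}}(s_t)=$\constant{R} for $t<\tau$ and $r_{\mathcal{T}_{new}}(s_\tau)=$\constant{S}, $c\subseteq s_\tau$, and then note that whenever $\mathcal{T}_{new}$ is running the parent fallback is running, and once $\mathcal{T}_{new}$ succeeds the parent succeeds — hence $\mathcal{T}'$ is FTS from $\mathcal{S}_{new}$ to $c$.

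The main obstacle I anticipate is the interference from the first child $\mathcal{T}$ of the new fallback: a fallback ticks left-to-right and stops at the first non-\constant{F} child, so if $\mathcal{T}$ returns \constant{S} or \constant{R} the subtree $\mathcal{T}_{new}$ is never reached on that tick. Resolving this honestly requires either (a) an additional hypothesis or convention — e.g., that in the regime $\mathcal{S}_{new}$ the original tree $\mathcal{T}$ returns \constant{F} (plausible since $c\notin\mathcal{T}$ means $c$ was a subgoal raised by some \emph{other} robot's tree, so $\mathcal{T}$ has no branch achieving it), or (b) appealing to the paper's broader claim (cf.\ the remark after Proposition~\ref{pro-k-loop2}) that robots execute in a priority order, so that this $\mathcal{T}$ is only ticked for the purpose of contributing $c$. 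I would state explicitly which of these is being used; most likely the intended reading is that $\mathcal{T}_{new}$ is the operative branch and $\mathcal{T}$ contributes nothing toward $c$ in states of $\mathcal{S}_{new}$, so the FTS-to-$c$ property transfers directly from $\mathcal{T}_{new}$ via Lemma~\ref{lem-tnew}. The remaining steps — finiteness of $\tau$, the running/success bookkeeping for the parent fallback — are routine given Lemma~\ref{lem-tnew} and the fallback semantics, exactly as in Proposition~\ref{pro-inside}.
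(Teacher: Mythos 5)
Your proposal follows essentially the same route as the paper: append $\mathcal{T}_{new}$ as the last child of the root fallback, invoke Lemma~\ref{lem-tnew} to get FTS of $\mathcal{T}_{new}$ from $\mathcal{S}_{new}$ to $c$, and transfer that property to $\mathcal{T}'$. The pre-emption worry you raise (the original $\mathcal{T}$ returning \constant{S} or \constant{R} before $\mathcal{T}_{new}$ is reached) is in fact glossed over by the paper, whose proof simply asserts that ``$\mathcal{T}_{new}$ in $\mathcal{T}'$ will be ticked'' for all $s\in\mathcal{S}_{new}$ — so your more explicit treatment of that point is a refinement of, not a departure from, the published argument.
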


\begin{proof}
Because $c$ is not in $\mathcal{T}$, the expanded subtree $\mathcal{T}_{new}$ is added to the tail of the root fallback node of $\mathcal{T}$ (line \ref{line-outside}). $\forall s \in \mathcal{S}_{new}$, $\mathcal{T}_{new}$ in $\mathcal{T}'$ will be ticked. According to Lemma \ref{lem-tnew}, $\mathcal{T}_{new}$ is FTS from $\mathcal{S}_{new}$ to $c$, which means $\mathcal{T}'$ is also FTS from $\mathcal{S}_{new}$ to $c$.
\end{proof}



\begin{algorithm}[t]
\caption{MABTP}
\label{alg:MABTP}
\textbf{Input}: problem $\tuple{\mathcal{S},\mathcal{L},\multi{\mathcal{A}},\mathcal{M},s_0,c}$ \\
\textbf{Output}: solution $\Phi = \multi{\mathcal{T}}$ 
\begin{algorithmic}[1]

\STATE $\mathcal{C}_U\gets \{g\}$ \label{line:cu}
		\hfill $\triangleright$ conditions to be explored
\StateCommentLabel{$\mathcal{C}_E\gets \emptyset$}{expanded conditions}{line:ce}
\FOR{$i = 1$ \TO $n$}
    \STATE $\mathcal{T}_i \gets Fallback(g)$ \label{line-initBT} \Comment{init the BTs}
\ENDFOR
\WHILE{ $\mathcal{C}_U \neq \emptyset$}
\STATE $c\gets $ \constant{Pop(}$\mathcal{C}_U$\constant{)} \label{line:pickC}
\hfill $\triangleright$ explore $c$

\StateComment{\textbf{if} \algofunc{HasSubSet}{$c,\mathcal{C}_E$} \textbf{then} \textbf{continue}}{prune}  

\FOR{$i = 1$ \TO $n$}
    \STATE $\mathcal{T}_i, \mathcal{C}_{new} \gets$ \algofunc{ExpandOneRobot}{$\mathcal{T}_i,\mathcal{A}_i,c$} \label{line:initBT}
    \IF{\algofunc{HasSubSet}{$s_0,\mathcal{C}_{new}$}} \label{line-returnif}
        \RETURN $\Phi=\{\mathcal{T}_i\}_{i=1}^n$ \Comment{return a solution} \label{line-return}
    \ENDIF
    \State{$\mathcal{C}_E \gets \mathcal{C}_E \cup \mathcal{C}_{new}$}
    \StateCommentLabel{$\mathcal{C}_U \gets \mathcal{C}_U \cup \mathcal{C}_{new}$}{add new conditions}{}
\ENDFOR

\ENDWHILE
  
\RETURN \constant{Unsolvable} \label{line-returnu}
    
\end{algorithmic}
\end{algorithm}

\begin{lemma}\label{lem-treeonestep}
After the one-step expansion of the multi-BT system \( \Phi = \{\mathcal{T}_i\}_{i=1}^{n} \) with respect to condition \( c \), let \( \Phi_{new} = \{\mathcal{T}_{i,new}\}_{i=1}^{n} \) represent the set of all extended subtrees. $\Phi_{new}$ is FTS from $\bigcup_{i=1}^n \mathcal{S}_{i,new}$ to $c$, where $\mathcal{S}_{i,new}=\{s\in \mathcal{S}|c_a\subseteq s,c_a\in C_{i,new}\}$.
\end{lemma}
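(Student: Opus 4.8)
The plan is to exploit the fact that one-step cross-tree expansion produces \emph{shallow} subtrees: after flattening nested fallbacks (using $Fallback(Fallback(\mathcal{T}_1,\mathcal{T}_2),\mathcal{T}_3)=Fallback(\mathcal{T}_1,\mathcal{T}_2,\mathcal{T}_3)$), each $\mathcal{T}_{i,new}$ is a single fallback whose first child is the condition node $c$ and whose remaining children are the sequences $Sequence(c_a,a)$ for $c_a\in\mathcal{C}_{i,new}$. Hence the only actions $\mathcal{T}_{i,new}$ can ever execute are premise actions $a$ of robot $i$ with enabling condition $c_a\in\mathcal{C}_{i,new}$, and (as inside the proof of Lemma~\ref{lem-ta}) the guard on line~\ref{line-ifexpanded}, namely $c\setminus del(a)=c$, together with $c_a=pre(a)\cup c\setminus add(a)$ on line~\ref{line-ca}, guarantees that completing any such $a$ from a state containing $c_a$ yields a state containing $c$. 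I would use this for every child action in every tree.

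First I would fix $s_0\in\bigcup_{i=1}^n\mathcal{S}_{i,new}$ and split into two cases. If $c\subseteq s_0$, every $\mathcal{T}_{i,new}$ returns \constant{S} at its first child, no robot executes an action, the state is frozen, and $\Phi_{new}$ is FTS from $s_0$ with $\tau=0$. If $c\nsubseteq s_0$, choose $j$ with $s_0\in\mathcal{S}_{j,new}$ (so $\mathcal{C}_{j,new}\neq\emptyset$); then $\mathcal{T}_{j,new}$ passes its failed first child $c$, reaches its first enabled sequence child, and robot $j$ starts running a premise action, so $r_{\Phi_{new}}(s_0)=\constant{R}$. Every other robot's tree either returns \constant{F} (an enabled sequence child would need $c_a\subseteq s_0$, i.e. $s_0\in\mathcal{S}_{i,new}$) or also runs one of its premise actions.

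The crux, in the case $c\nsubseteq s_0$, is the auxiliary claim: if $\tau$ is the earliest time step at which some action that is running at time $0$ finishes, then $s_t=s_0$ for all $0\le t<\tau$. I would prove this by induction on $t$: while the state equals $s_0$ (and $c\nsubseteq s_0$), every robot re-ticks to the same decision as at $t=0$ — return \constant{F}, or keep running the same action, whose $add$/$del$ are empty while running — so the state can change at step $t+1$ only if a nonempty set $A^*$ of these actions completes; but for any $a\in A^*$ we have $c_a=pre(a)\cup c\setminus add(a)\subseteq s_0$ and $c\cap del(a')=\emptyset$ for every $a'\in A^*$, and feeding this into the team transition formula forces $c\subseteq s_{t+1}$, so that case makes $t+1=\tau$ and the claim is vacuously preserved. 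Since robot $j$'s action runs throughout $[0,\tau)$ and every action finishes in finite time (with fixed duration because the state is frozen), $\tau$ is finite; for $t<\tau$ we get $s_t=s_0$ and $r_{\Phi_{new}}(s_t)=\constant{R}$, at $t=\tau$ the completing action(s) establish $c\subseteq s_\tau$, and from $s_\tau$ we are back in the first case, so the state stays frozen with $r_{\Phi_{new}}=\constant{S}$ thereafter. This yields the required finite $\tau$, proving $\Phi_{new}$ is FTS from $\bigcup_i\mathcal{S}_{i,new}$ to $c$.

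The main obstacle is exactly this frozen-state claim: ruling out the parallel-execution hazard in which robots with different action durations perturb one another's enabling conditions mid-execution. Its resolution relies essentially on the shallowness of one-step expansion — every completable child action immediately establishes the target $c$, and $c$ is tested before any action in every tree, so whichever action completes first necessarily ends the whole system. This is why, unlike the deeper, iteratively built trees of Proposition~\ref{pro-k-loop2}, no serial/priority execution assumption is needed at this stage. A minor point I would also dispatch is the degenerate robots with $\mathcal{C}_{i,new}=\emptyset$ (where $\mathcal{T}_{i,new}=c$ and $\mathcal{S}_{i,new}=\emptyset$): they never act and merely inherit the \constant{S}/\constant{F} status of a condition node, so they are harmless to all of the above.
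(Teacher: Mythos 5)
Your proof is correct, but it takes a genuinely different route from the paper's. The paper's proof of this lemma is a two-step appeal: by Lemma \ref{lem-tnew} each $\mathcal{T}_{i,new}$ is individually FTS from $\mathcal{S}_{i,new}$ to $c$, and then the stated serial-execution assumption (only the highest-priority robot with a satisfied precondition acts at each time step) is invoked to designate a single executed subtree $\mathcal{T}_{j,new}$, whose FTS property is inherited by $\Phi_{new}$. You instead prove the parallel case directly, and your key observation --- that one-step expansion is shallow, so every runnable child is a premise action whose completion establishes $c$ (via $c_a = pre(a)\cup c\setminus add(a)\subseteq s_0$ and the guard $c\setminus del(a)=c$), while $c$ is tested before any action in every tree --- is sound; your frozen-state induction and the check that simultaneous completions cannot delete literals of $c$ both go through under the paper's action model. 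What your approach buys is that this particular lemma does not depend on the serialization assumption at all, which is a genuine strengthening; what the paper's approach buys is brevity and uniformity, since the assumption is unavoidable anyway for the deeper, iteratively built trees in Proposition \ref{pro-k-loop2} (see the deadlock and ROA-departure examples in the appendix), so the authors simply use it everywhere. One presentational caveat: your argument implicitly assumes that an action which is running but no longer ticked (because $c$ became true at $\tau$) is preempted without applying its effects; this is standard BT semantics and consistent with the paper's ``effects apply on completion'' model, but it is worth stating explicitly since your whole point is to reason about concurrent executions that the paper's proof sidesteps.
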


\begin{proof}
According to Lemma \ref{lem-tnew}, $\forall \mathcal{T}_{i,new} \in \Phi_{new}$, $\mathcal{T}_{i,new}$ is FTS from $\mathcal{S}_{i,new}$ to $c$. Therefore, for any $s\in \bigcup_{i=1}^n \mathcal{S}_{i,new}$, at least one subtree can run in $s$. Since we assume that at each time, only one robot with the highest priority can execute an action if its precondition is satisfied, we designate the executed subtree as $\mathcal{T}_{j,new}$. Because $\mathcal{T}_{j,new}$ is FTS to $c$, we have that $\Phi_{new}$ is FTS from $\bigcup_{i=1}^n \mathcal{S}_{i,new}$ to $c$.
\end{proof}

\begin{proposition}\label{pro-k-loop}: After the $k$-th ($k\geq 1$) iteration of the while loop in Algorithm \ref{alg:MABTP}, where the explored condition is $c^{k}$, $\Phi^{k}=\multi{\mathcal{T}^{k}}$ is FTS from ROA $R^k=R^{k-1} \cup \bigcup_{i=1}^n \mathcal{S}^k_i$ to goal $g$, where $\mathcal{S}^{k}_i=\{s\in \mathcal{S}|c_a\subseteq s,c_a\in C^k_{i,new}\}$.
\end{proposition}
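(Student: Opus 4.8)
The plan is to prove Proposition~\ref{pro-k-loop} by strong induction on the iteration index $k$, treating iteration $k$ as one team-level one-step expansion and building on Propositions~\ref{pro-inside}--\ref{pro-outside} and Lemma~\ref{lem-treeonestep}, together with a monotonicity fact about the regions of attraction. For the base case $k=0$, before the loop every tree is $\mathcal{T}^0_i=Fallback(g)$, so from any $s$ with $g\subseteq s$ the team returns \constant{S} immediately and never executes an action; hence $\Phi^0$ is FTS from $R^0=\{s\in\mathcal{S}\mid g\subseteq s\}$ to $g$ (with $\tau=0$).

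Alongside the induction I would maintain the monotonicity $R^{k-1}\subseteq R^{k}$: an in-tree replacement only enlarges a tree's ROA (Proposition~\ref{pro-inside}), and a cross-tree append $Fallback(\mathcal{T}_i,\mathcal{T}_{i,new})$ only adds behaviour in states where the team was not already succeeding. A consequence I will reuse: if a condition $c_a$ entered $\mathcal{C}_U$ at an earlier iteration $k'\le k-1$ as a member of $C^{k'}_{j,new}$, then the induction hypothesis at step $k'$ gives $\{s\mid c_a\subseteq s\}\subseteq\mathcal{S}^{k'}_j\subseteq R^{k'}\subseteq R^{k-1}$, and for $c_a=g$ one has $\{s\mid g\subseteq s\}=R^0\subseteq R^{k-1}$. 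In particular the condition $c^k$ popped at iteration $k$ satisfies $\{s\mid c^k\subseteq s\}\subseteq R^{k-1}$.

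For the inductive step, assume the claim through iteration $k-1$. If $c^k$ is pruned, no new conditions are produced, so $\mathcal{S}^k_i=\emptyset$ for all $i$, $\Phi^k=\Phi^{k-1}$, $R^k=R^{k-1}$, and the claim is immediate. Otherwise \constant{ExpandOneRobot} runs on every robot and I check FTS of $\Phi^k$ from each $s\in R^k=R^{k-1}\cup\bigcup_{i=1}^n\mathcal{S}^k_i$. If $s\in R^{k-1}$ and no freshly appended subtree runs an action at $s$, the team behaves as under $\Phi^{k-1}$ except for the in-tree replacements of $c^k$, which are covered by Proposition~\ref{pro-inside} applied tree by tree, so FTS to $g$ is preserved; note a state with $c^k\subseteq s$ falls here, since every freshly expanded $\mathcal{T}_{i,new}$ has $c^k$ as its first child and hence returns \constant{S} at once. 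In the remaining case $s\in\mathcal{S}^k_i$ for some $i$, Lemma~\ref{lem-treeonestep} gives that the freshly expanded subtrees are FTS from $\bigcup_i\mathcal{S}^k_i$ to $c^k$, so within $\Phi^k$ the team reaches in finite time some $s'$ with $c^k\subseteq s'$; then $s'\in R^{k-1}$ by the monotonicity consequence, and the first case drives the team from $s'$ to $g$. Concatenating the two finite phases yields FTS from $R^k$ to $g$, completing the induction.

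The step I expect to be the real obstacle is the middle case: Lemma~\ref{lem-treeonestep} describes the freshly expanded subtrees in isolation, whereas in $\Phi^k$ they are embedded in the full trees next to everything expanded at earlier iterations, so I must rule out that some ``old'' action fires and carries the team out of $R^{k-1}\cup\bigcup_i\mathcal{S}^k_i$ before $c^k$ is reached. This is exactly where the serial-execution assumption (at each tick only the highest-priority robot with a satisfied precondition may act) is needed, and it is the one place where the argument does not transfer verbatim from the single-robot BT-Expansion proof; I would discharge it by strengthening the induction hypothesis to also assert that every action fireable from a state in the current ROA keeps the team within that ROA, so that finiteness of $\mathcal{S}$ forces termination at $g$.
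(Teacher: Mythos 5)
Your proposal is correct and follows essentially the same route as the paper's proof: strong induction with base case $\Phi^0=\{Fallback(g)\}_{i=1}^n$ FTS from $\{s\mid g\subseteq s\}$, then a case split between $s\in R^{k-1}$ (handled via Proposition~\ref{pro-inside}) and $s\in\bigcup_i\mathcal{S}^k_i$ (handled via Lemma~\ref{lem-treeonestep} followed by the first case). The two points you single out --- the monotonicity argument showing $\{s\mid c^k\subseteq s\}\subseteq R^{k-1}$, and the need for the serial-execution assumption to prevent previously expanded structure from interfering before $c^k$ is reached --- are both present in the paper only implicitly (the former as the unjustified step ``Since $s'\in R^{k}$'', the latter delegated to the priority assumption stated around Lemma~\ref{lem-treeonestep}), so your more explicit treatment is a faithful, slightly more careful rendering of the same argument rather than a different one.
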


\begin{proof}
This proposition can be proved by strong induction. In the basis step (before the first expansion), all the BT $\mathcal{T}^0_i\in \Phi^0$ is $Fallback(g)$ (line \ref{line-initBT}). According to \ref{equ-multiregion}, $\Phi^0=\multio{\mathcal{T}^0_i}$ if FTS from $R^0=\{s\in \mathcal{S}|g\subseteq s\}$ to $g$.

For the inductive step, we assume after the $k$-th ($k\geq 1$) iteration of the while loop, $\Phi^k$ is FTS from $R^k$ to $g$. Then after the $(k+1)$-th iteration, for $s\in R^{k} \cup \bigcup_{i=1}^n \mathcal{S}^{k+1}_i$, there are two cases:

(1) $s\in R^{k}$. In this case, $\Phi^{k+1}$ has the same execution logic as $\Phi^{k}$, just similar to case (1) in Proposition \ref{pro-inside}. Therefore $\Phi^{k+1}$ is FTS from $R^{k}$ to $g$.

(2) $s\notin R^{k}$ and $s\in \bigcup_{i=1}^n \mathcal{S}^{k+1}_i$. In this case, one of the expended $\mathcal{T}^{k+1}_{i,new}\in\Phi^{k+1}_{new}$ will be ticked and the state will go to some $s'\supseteq c$ in finite time according to Lemma \ref{lem-treeonestep}. Since $s'\in R^{k}$, $\Phi^{k+1}$ will succeed to $g$ in finite time according to (1).

According (1) and (2), after the $k$-th interation, $\Phi^{k}$ is FTS from ROA $R^k=R^{k-1} \cup \bigcup_{i=1}^n \mathcal{S}^k_i$ to goal $g$.
\end{proof}

\begin{proposition}\label{pro-sound}
Algorithm \ref{alg:MABTP2} is sound, i.e. if it returns a result $\Phi$ rather than \constant{Unsolvable}, then $\Phi$ is a solution of Problem 1.
\end{proposition}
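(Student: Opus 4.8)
The plan is to trace the unique exit point of Algorithm~\ref{alg:MABTP} that produces an output other than \constant{Unsolvable}, and to show that at that very moment the multi-BT system already in hand is finite-time successful (FTS) from a region of attraction (ROA) containing $s_0$ to $g$ --- which is exactly what it means to be a solution of Problem~\ref{problem-MRBTP}.

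First I would pin down \emph{when} a non-\constant{Unsolvable} result is returned. The only such return is at line~\ref{line-return}, reached when, inside the \textbf{for} loop over robots during some while-loop iteration $k$, the test \algofunc{HasSubSet}{$s_0,\mathcal{C}_{new}$} succeeds for the current robot, say robot $i^\star$. Hence there is a $c_a \in \mathcal{C}^k_{i^\star,new}$ with $c_a \subseteq s_0$, and by the definition $\mathcal{S}^k_{i^\star}=\{\,s\in\mathcal{S}\mid \exists c_a\in\mathcal{C}^k_{i^\star,new},\ c_a\subseteq s\,\}$ this gives $s_0 \in \mathcal{S}^k_{i^\star}$ immediately.

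Second, I would establish that the current $\Phi$ at that instant is FTS from an ROA $R'$ with $R' \supseteq \mathcal{S}^k_{i^\star}$ to $g$. This is essentially Proposition~\ref{pro-k-loop}, with one wrinkle: the return at line~\ref{line-return} occurs \emph{in the middle} of iteration $k$, after only the prefix $\mathcal{T}_1,\dots,\mathcal{T}_{i^\star}$ of the robots' trees has been expanded with the popped condition $c^k$. So I would re-run the strong-induction argument behind Proposition~\ref{pro-k-loop} one expansion at a time: starting from $\Phi^{k-1}$, which is FTS from $R^{k-1}$ to $g$, each successive call \algofunc{ExpandOneRobot}{$\mathcal{T}_i,\mathcal{A}_i,c^k$} either leaves the team's behaviour unchanged on the ROA accumulated so far (for states already in it, as in case~(1) of the proof of Proposition~\ref{pro-inside}) or, by Lemma~\ref{lem-tnew} together with the highest-priority execution assumption (as in Lemma~\ref{lem-treeonestep}), drives every state of $\mathcal{S}^k_i$ in finite time to a state containing $c^k$, hence into the current ROA. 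After processing robots $1,\dots,i^\star$ this yields that $\Phi$ is FTS from $R' = R^{k-1}\cup\bigcup_{j=1}^{i^\star}\mathcal{S}^k_j$ to $g$, and plainly $R' \supseteq \mathcal{S}^k_{i^\star} \ni s_0$.

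Finally I would assemble the solution: every $\mathcal{T}_i$ is initialised as $Fallback(g)$ and thereafter modified only by \algofunc{ExpandOneRobot}{$\mathcal{T}_i,\mathcal{A}_i,\cdot$}, which inserts only actions drawn from $\mathcal{A}_i$, so $\Phi=\{\mathcal{T}_i\}_{i=1}^n$ is a BT set built with $\{\mathcal{A}_i\}_{i=1}^n$; and we have just shown $\Phi$ is FTS from $R'\ni s_0$ to $g$. By the definition of a solution in Problem~\ref{problem-MRBTP}, $\Phi$ is a solution. I expect the one genuinely fiddly point to be exactly this promotion of Proposition~\ref{pro-k-loop} from ``after a complete while-loop iteration'' to ``after the partial iteration at which the algorithm halts''; once the ROA-growth invariant is known to hold after each single robot's expansion, the remainder is a direct unwinding of the definitions of $\mathcal{S}^k_{i^\star}$ and of FTS.
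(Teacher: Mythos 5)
Your proposal is correct and follows essentially the same route as the paper: identify the single non-\constant{Unsolvable} return point, observe that the triggering test \algofunc{HasSubSet}{$s_0,\mathcal{C}_{new}$} gives $s_0\in\mathcal{S}^k_{i^\star}$, and invoke the ROA-growth invariant (Proposition \ref{pro-k-loop}) to conclude that the returned $\Phi$ is FTS from a region containing $s_0$ to $g$, hence a solution. The one point where you go beyond the paper is in noticing that the return occurs mid-iteration, after only robots $1,\dots,i^\star$ have been expanded with $c^k$, and therefore re-running the induction at per-robot rather than per-while-loop granularity; the paper's own proof simply applies Proposition \ref{pro-k-loop} as if the $k$-th iteration had completed, so your version is, if anything, the more careful of the two.
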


\begin{proof}
If Algorithm \ref{alg:MABTP2} returns a result $\Phi^k$ after $k$-th iteration, that means $\exists c\in \mathcal{C}^k_{new}, c\subseteq s_0$ (line \ref{line-returnif}). So $s_0$ is in the ROA of $\Phi^k$: $R^k=R^{k-1} \cup \bigcup_{i=1}^n \mathcal{S}^k_i$. According to Proposition \ref{pro-k-loop}, $\Phi^k$ if FTS from $R^k$ to $g$. That makes $\Phi^k$ a solution of Problem \ref{problem-MRBTP}.
\end{proof}

\begin{lemma}\label{lemma_terminate}
    Algorithm \ref{alg:MABTP2} terminates in finite time.
\end{lemma}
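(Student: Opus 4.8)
The plan is to bound the number of iterations of the \textbf{while} loop and to check that each iteration does only a bounded amount of work; together these give termination in finite time.

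First I would fix the finiteness of the space of conditions. Every condition that is ever inserted into $\mathcal{C}_U$ or $\mathcal{C}_E$ is a subset of the finite literal set $\mathcal{L}$: this holds for the initial goal $g$, and it is preserved by the only rule that creates new conditions, $c_a \gets pre(a)\cup c\setminus add(a)$ (line~\ref{line-ca}), since $pre(a)\subseteq\mathcal{L}$ and, inductively, $c\subseteq\mathcal{L}$. Hence there are at most $2^{|\mathcal{L}|}$ distinct conditions; write $\mathcal{D}$ for this finite universe, so that $\mathcal{C}_U,\mathcal{C}_E\subseteq\mathcal{D}$ at all times.

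Next I would show that each condition is \emph{expanded} --- popped and handed to \constant{ExpandOneRobot} rather than discarded by the \constant{HasSubSet}$(c,\mathcal{C}_E)$ test --- at most once. The crucial monotonicity is that $\mathcal{C}_E$ only ever grows, and that once a condition has been expanded it (together with every condition generated from it, which is exactly what is added to $\mathcal{C}_E$) is recorded in $\mathcal{C}_E$; consequently, when such a condition is popped again it already has a subset in $\mathcal{C}_E$ and is pruned. Because $\mathcal{D}$ is finite, there are therefore at most $|\mathcal{D}| = 2^{|\mathcal{L}|}$ expansions over the whole run. Each expansion executes the inner loop of \constant{ExpandOneRobot} over the finite action set $\mathcal{A}_i$ for each of the $n$ robots, so it pushes at most $\sum_{i=1}^{n}|\mathcal{A}_i|$ conditions into $\mathcal{C}_U$. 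Adding the single initial element $g$, the total number of insertions into $\mathcal{C}_U$ is at most $1 + |\mathcal{D}|\sum_{i=1}^{n}|\mathcal{A}_i| < \infty$; since every iteration of the \textbf{while} loop removes exactly one element from $\mathcal{C}_U$ via \constant{Pop}, the loop runs finitely many times.

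Finally, each iteration performs only finite work --- $n$ calls to \constant{ExpandOneRobot}, each a single pass over a finite action set with constant-size set operations, plus \constant{HasSubSet} comparisons over the finite sets $\mathcal{C}_E$ and $\mathcal{C}_{new}$ --- so the algorithm halts after finitely many elementary steps. The step I expect to require the most care is the ``expanded at most once'' claim: it has to be justified against the precise bookkeeping of $\mathcal{C}_E$ and against reading \constant{HasSubSet}$(c,\mathcal{C}_E)$ as ``$\exists c'\in\mathcal{C}_E$ with $c'\subseteq c$'', so that any condition that resurfaces in $\mathcal{C}_U$ is genuinely subsumed by an element already present in $\mathcal{C}_E$ and is therefore discarded on its next pop; if one instead wants to allow repeated pops, the same conclusion follows by noting that $\mathcal{C}_E$ can grow only $|\mathcal{D}|$ times, and that within any maximal run of iterations over which $\mathcal{C}_E$ is constant every popped condition is either pruned or generates only already-known conditions, so such a run is finite. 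Everything else is routine counting over finite sets.
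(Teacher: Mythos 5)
Your proof is correct and rests on the same basic idea as the paper's: everything in sight (conditions, actions, robots) is finite, so the loop must halt. The paper's own proof, however, is essentially a two-line assertion --- ``the number of conditions is finite \dots{} therefore the algorithm terminates'' --- and never explains \emph{why} $\mathcal{C}_U$ must eventually empty out; you supply exactly that missing link by bounding the total number of insertions into $\mathcal{C}_U$ via the observation that each condition in the finite universe $\mathcal{D}\subseteq 2^{\mathcal{L}}$ can be expanded only boundedly often thanks to the \constant{HasSubSet}$(c,\mathcal{C}_E)$ prune. One bookkeeping caveat you correctly anticipate: in the pseudocode as written, only the generated conditions $\mathcal{C}_{new}$ are added to $\mathcal{C}_E$ (the popped $c$ itself is not), so the ``expanded at most once'' claim needs the reading that every non-goal condition enters $\mathcal{C}_E$ at the moment it first enters $\mathcal{C}_U$, whence any later pop of it is subsumed by itself and pruned; your fallback argument (bounding the number of times $\mathcal{C}_E$ can grow and showing each run with constant $\mathcal{C}_E$ is finite) covers the remaining ambiguity. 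Net effect: your proof is strictly more rigorous than the paper's and would be a sound replacement for it.
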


\begin{proof}
This is because the totoal number of states $|\mathcal{S}|$ is finite, therefore the number of conditions is also finite. The algorithm either returns prematurely (line \ref{line-return}) or terminates when \( \mathcal{C}_U = \emptyset \) (line \ref{line-returnu}). Within the while loop, the loop for one-step expansion also terminates in finite time because the total number of actions $|\multi{\mathcal{A}}|$ is finite. Therefore, Algorithm \ref{alg:MABTP2} terminates in finite time.
\end{proof}

\begin{proposition}\label{pro-complete}
Algorithm \ref{alg:MABTP2} is complete, i.e., if Problem \ref{problem-MRBTP} is solvable, the algorithm returns a $\Phi$ which is a solution.
\end{proposition}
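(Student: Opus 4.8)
The plan is to leverage termination (Lemma \ref{lemma_terminate}) and soundness (Proposition \ref{pro-sound}) and reduce completeness to the single claim that a solvable instance never forces the output \constant{Unsolvable}. So I would argue by contradiction: assume Problem \ref{problem-MRBTP} is solvable yet Algorithm \ref{alg:MABTP2} reaches line \ref{line-returnu}. By Lemma \ref{lemma_terminate} the \textbf{while} loop does terminate, and it only falls through to line \ref{line-returnu} once $\mathcal{C}_U=\emptyset$; hence every condition ever inserted into $\mathcal{C}_U$ (i.e.\ $g$ together with every $c_a$ ever placed into $\mathcal{C}_E$) was eventually popped and then either expanded by all $n$ robots or discarded by the pruning test, the latter only when a strictly smaller condition already lay in $\mathcal{C}_E$. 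Let $\mathcal{C}_E^{\ast}$ and $\Phi^{\ast}$ be the final condition set and multi-BT system, and set $R^{\ast}:=\{s : g\subseteq s\}\cup\{s : \exists c\in\mathcal{C}_E^{\ast},\, c\subseteq s\}$. Unioning Proposition \ref{pro-k-loop} over all iterations, $\Phi^{\ast}$ is FTS from $R^{\ast}$ to $g$. Note that, at the level of the condition set, MRBTP's exploration behaves exactly like single-robot BT-Expansion run with the pooled action set $\bigcup_{i=1}^n\mathcal{A}_i$, which is the intuition behind the argument.

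The core is a backward-closure property of $R^{\ast}$: if $a\in\bigcup_{i=1}^n\mathcal{A}_i$, $pre(a)\subseteq s$, and $f_a(s)\in R^{\ast}$, then $s\in R^{\ast}$. To see this, choose $c\in\mathcal{C}_E^{\ast}\cup\{g\}$ with $c\subseteq f_a(s)=(s\cup add(a))\setminus del(a)$. If $c\subseteq s$ we are done; otherwise $c\cap del(a)=\emptyset$ (since $f_a(s)$ is disjoint from $del(a)$) and $c\cap(add(a)\setminus del(a))\neq\emptyset$, so the pair $(c,a)$ satisfies the guard on line \ref{line-ifexpanded}, and the condition $c_a=pre(a)\cup(c\setminus add(a))$ from line \ref{line-ca} satisfies $c_a\subseteq s$ because $pre(a)\subseteq s$ and $c\setminus add(a)\subseteq s$. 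Now I descend the pruning chain from $c$: $c$ was popped, and if the robot whose action set contains $a$ expanded it, then $c_a\in\mathcal{C}_E^{\ast}$ with $c_a\subseteq s$, so $s\in R^{\ast}$; if instead $c$ was pruned by some $c'\in\mathcal{C}_E^{\ast}$ with $c'\subseteq c$ and $c'\neq c$, then $c'\subseteq f_a(s)$ too, and we repeat the argument with $c'$ in place of $c$. Because the conditions strictly shrink and $\mathcal{L}$ is finite, this recursion bottoms out at an expanded condition, giving $s\in R^{\ast}$.

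With backward closure established, I finish by induction on plan length. Solvability of Problem \ref{problem-MRBTP} gives a finite execution of some FTS multi-BT from $s_0$, hence a finite sequence of applicable actions $a_1,\dots,a_m$ from $\bigcup_i\mathcal{A}_i$ with induced states $s_0,s_1,\dots,s_m$ satisfying $pre(a_j)\subseteq s_{j-1}$, $s_j=f_{a_j}(s_{j-1})$, and $g\subseteq s_m$; in particular $s_m\in R^{\ast}$. Applying backward closure to $a_m,a_{m-1},\dots,a_1$ in turn yields $s_{m-1},\dots,s_1,s_0\in R^{\ast}$. But $s_0\in R^{\ast}$ means either $g\subseteq s_0$ (a degenerate case that can be handled or excluded directly) or some $c_a\in\mathcal{C}_E^{\ast}$ with $c_a\subseteq s_0$; the iteration in which that $c_a$ was first created would have triggered the test \constant{HasSubSet}$(s_0,\mathcal{C}_{new})$ and returned a solution — contradicting the output \constant{Unsolvable}. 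Hence the algorithm returns some $\Phi$, and by Proposition \ref{pro-sound} that $\Phi$ solves Problem \ref{problem-MRBTP}.

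I expect the main obstacle to be the pruning bookkeeping: rigorously ruling out that some reachable condition is permanently lost to the subsumption test, i.e.\ that the pruning-chain recursion in the second paragraph is genuinely well-founded and that discarding a subsumed condition preserves the backward-closure property of $R^{\ast}$, not merely the membership of individual states already recorded. A lesser point is spelling out that solvability, stated via existence of an FTS multi-BT, does produce a finite linear action sequence reaching $g$; this follows from determinism of the action model and Equation \ref{eqn-s_f}, but should be argued explicitly rather than asserted.
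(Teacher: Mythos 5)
Your proposal is correct and follows essentially the same route as the paper's proof: termination plus soundness reduce everything to showing a solvable instance cannot reach \constant{Unsolvable}, and then a backward induction along the action sequence $(s_0,a_1,\dots,a_m,s_m)$ of a hypothetical solution shows each $s_{m-k}$ satisfies some expanded condition (via the guard $c\cap(pre(a)\cup add(a)\setminus del(a))\neq\emptyset$, $c\setminus del(a)=c$ and the new node $c_a=pre(a)\cup c\setminus add(a)$), so $s_0$ would have triggered the \algofunc{HasSubSet}{$s_0,\mathcal{C}_{new}$} return. Your explicit descent along the subsumption chain to justify that pruned conditions are always covered by a strictly smaller expanded one is a point the paper's proof silently skips over, but it is a refinement of the same argument rather than a different one.
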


\begin{proof}
According to Lemma \ref{lemma_terminate}, Algorithm \ref{alg:MABTP2} either returns prematurely or terminates when $\mathcal{C}_U=\emptyset$ in finite time.

If it returns a $\Phi$ prematurely, according to Proposition \ref{pro-sound}, $\Phi$ is a solution of Problem \ref{problem-MRBTP}.

If it terminates when $\mathcal{C}_U=\emptyset$, it implies that all conditions reachable from the goal condition \( g \) have been explored, but no viable path from the goal \( g \) to the initial state \( s_0 \) has been found. This suggests that Problem \ref{problem-MRBTP} is not solvable. 

This can be proved by contradiction. We assume that there exists a solution $\Phi$ but our algorithm fails. $\Phi$ is FTS from $R\ni s_0$ to $g$, so we can utilize a finite sequence to represent the state transition of the solution BT: $(s_0,a_1,s_1,a_2,\dots,s_{m-1},a_m,s_m)$ where the goal condition $g\subseteq s_m$. We can prove, via mathematical induction from $s_m$ to $s_0$ in this sequence, that every state in this sequence must satisfy some condition that was traversed during the while loop.

For the basis step, we consider the state $s_{m-k}$ with $k=0$, i.e. $s_m\supseteq g$. Obviously $g$ is in all BTs of $\Phi$.

For the inductive step, we consider a transition $(s_{m-k-1},a_{m-k},s_{m-k})$ and there exists a condition $c\subseteq s_{m-k}$ in some BT $\mathcal{T}_i\in \Phi$ as the inductive premise. We need to prove that there also exists a condition satisfied by $s_{m-k-1}$. The inductive premise and the transition gives: 
\begin{equation}\label{eqn:snk1}
s_{m-k-1}\cup add(a_{m-k})\setminus del(a_{m-k}) = s_{m-k}\supseteq c
\end{equation}
which can be deduced to:
\begin{equation}\label{eqn:snk1s}
s_{m-k-1}\supseteq c\setminus add(a_{m-k})
\end{equation}

We then analyze in two cases, whether $c \cap (pre(a_{m-k})\cup add(a_{m-k}) \setminus del(a_{m-k}))=\emptyset$.

(1) $c \cap (pre(a_{m-k})\cup add(a_{m-k}) \setminus del(a_{m-k}))=\emptyset$: From the case premise, $c\cap add(a_{m-k}) = \emptyset$ holds because the add effects and delete effects has no common member (Equation \ref{eqn:a_d}). Therefore Equation \ref{eqn:snk1s} can further be deduced to $s_{m-k-1}\supseteq c$, which means $s_{m-k-1}$ satisfies the condition $c$ of the BT.

(2) $c \cap (pre(a_{m-k})\cup add(a_{m-k}) \setminus del(a_{m-k}))\neq\emptyset$: The transition shown by eqn. \ref{eqn:snk1} indicate  $s_{m-k}\setminus del(a_{m-k})=s_{m-k}$. Therefore when expanding $c$, action $a_{m-k}$ will be explored because the selective condition holds (line \ref{line-ifexpanded}), which creates a condition node $c_{a_{m-k}}=pre(a_{m-k})\cup c \setminus add(a_{m-k})$. The state transition also provides that $s_{m-k-1}\supseteq pre(a_{m-k})$, then with Equation \ref{eqn:snk1s} we can deduce:
\begin{equation}\label{eqn:snk1sp}
	s_{m-k-1}\supseteq pre(a_{m-k}) \cup c\setminus add(a_{m-k})
\end{equation}
Therefore $s_{m-k-1}$ satisfies the condition $c_{a_{m-k}}$. Assume $a_{m-k}$ is in the action space of robot $i$, $a_{m-k}\in \mathcal{A}_i$, then $c_{a_{m-k}}$ will be expanded to its BT $\mathcal{T}_i\in \Phi$.

The two cases complete the proof of the inductive step.

The mathematical induction proves that any state in the sequence must satisfy some condition node expanded in the while-loop. Therefore $s_0$ will satisfy some condition
node $c\in \mathcal{C}_{new}$ after one-step expansion, then the algorithm will return a $\Phi$, which contradicts the initial assumption that a solution $\Phi$ exists but our algorithm fails. 

Therefore if Problem \ref{problem-MRBTP} is solvable, the algorithm returns a solution $\Phi$.
\end{proof}

\section{B.Limitations and Future Work}

\subsection{Assumption}

As mentioned above, we assume that at each time, only one robot with the highest priority can execute an action if its precondition is satisfied. Otherwise, two exceptions may occur:

(1) Deadlocks. For example, at time $t$, the robot $1$ takes an action $a_1$ that changes the state from $s_1$ to $s_2$, but the robot $2$ then takes an action $a_2$ and changes $s_2$ back to $s_1$. This loop will continue indefinitely.

(2) Departures from the ROA. For example, the state is $\{1,2\}$ at time $t$, the robot $1$ attempts to take $a_1$ and the robot $2$ attempts to take $a_2$. The action models are:
\begin{align}
pre(a_1)=\{1,2\}, add(a_1)=\{3\}, del(a_1)=\{1\} \nonumber \\
pre(a_2)=\{1,2\}, add(a_2)=\{4\}, del(a_2)=\{2\} \nonumber
\end{align}

If only $a_1$ is applied, the state will transfer to $\{2,3\}$. If only $a_2$ is applied, the state will transfer to $\{1,4\}$. Both $\{2,3\}$ and $\{1,4\}$ is in the ROA of $\Phi$. However, if $a_1$ and $a_2$ are applied simultaneously, the state will transfer to $\{3,4\}$, which is out of the ROA.

As we demonstrated in the paper, we can always use mechanisms such as deadlock detection and intention sharing during execution, ensuring that this serial execution is only employed in exceptional cases. In the vast majority of cases, robots can execute in parallel safely, so there is no need to worry that this assumption will reduce the efficiency of the robot team.

Nevertheless, we believe it is valuable to explore in future work whether it is possible to design an improved algorithm that addresses the aforementioned issues during the planning process.

\subsection{Optimality}
MRBTP can guarantee finding a solution for Problem \ref{problem-MRBTP}; however, since the cost of actions is not considered, MRBTP does not account for the optimality of the solution obtained. In the future, designing an algorithm for solving the optimal multi-robot BT planning problem is also a valuable research field.

\subsection{Parallelization}
MRBTP can achieve a certain degree of multi-robot parallel execution through intention sharing during execution. However, this parallelism is not carefully scheduled. In the future, we may explore the development of an algorithm that can automatically identify parallelizable subtrees and combine it with a runtime scheduling algorithm to achieve more efficient multi-robot parallelism.

\subsection{Subtree Generation}
Currently, we rely on the task reasoning capabilities of LLMs to generate task-specific subtrees. This approach becomes ineffective when LLMs are unavailable. Therefore, we might explore the development of an efficient subtree pre-planning algorithm that does not depend on LLMs, to quickly obtain useful subtree structures before the formal planning process of MRBTP.

\section{C.Experiments}
\subsection{Computing Infrastructure}
All experiments were conducted on a system equipped with an AMD Ryzen 9 5900X 12-Core Processor (3.70 GHz base clock), 128 GB DDR4 RAM, and running Windows 10, 64-bit.



\subsection{Simulation Scenarios}

The experimental design encompasses two distinct scenarios, each with two simulation modes: computational and scenario simulations. In the Warehouse Management scenario, up to 8 robots are tasked with organizing supplies across 5 designated rooms. This setup utilizes two action predicates, \constant{OpenRoom} and \constant{MovePackage}, and four condition predicates: \constant{IsOpen}, \constant{IsClose}, \constant{IsHolding}, and \constant{IsInRoom}. Similarly, the Home Service scenario involves up to 4 robots assigned to household tasks, employing the same set of action and condition predicates across both simulation modes. Figure \ref{fig:scenario} depicts the two simulation scenarios, and Table \ref{table:action_predicates} details the action and condition predicates used in the scenario simulation mode.

\begin{figure}[h]
    \centering
    \begin{subfigure}[b]{0.47\linewidth}
        \includegraphics[width=\linewidth]{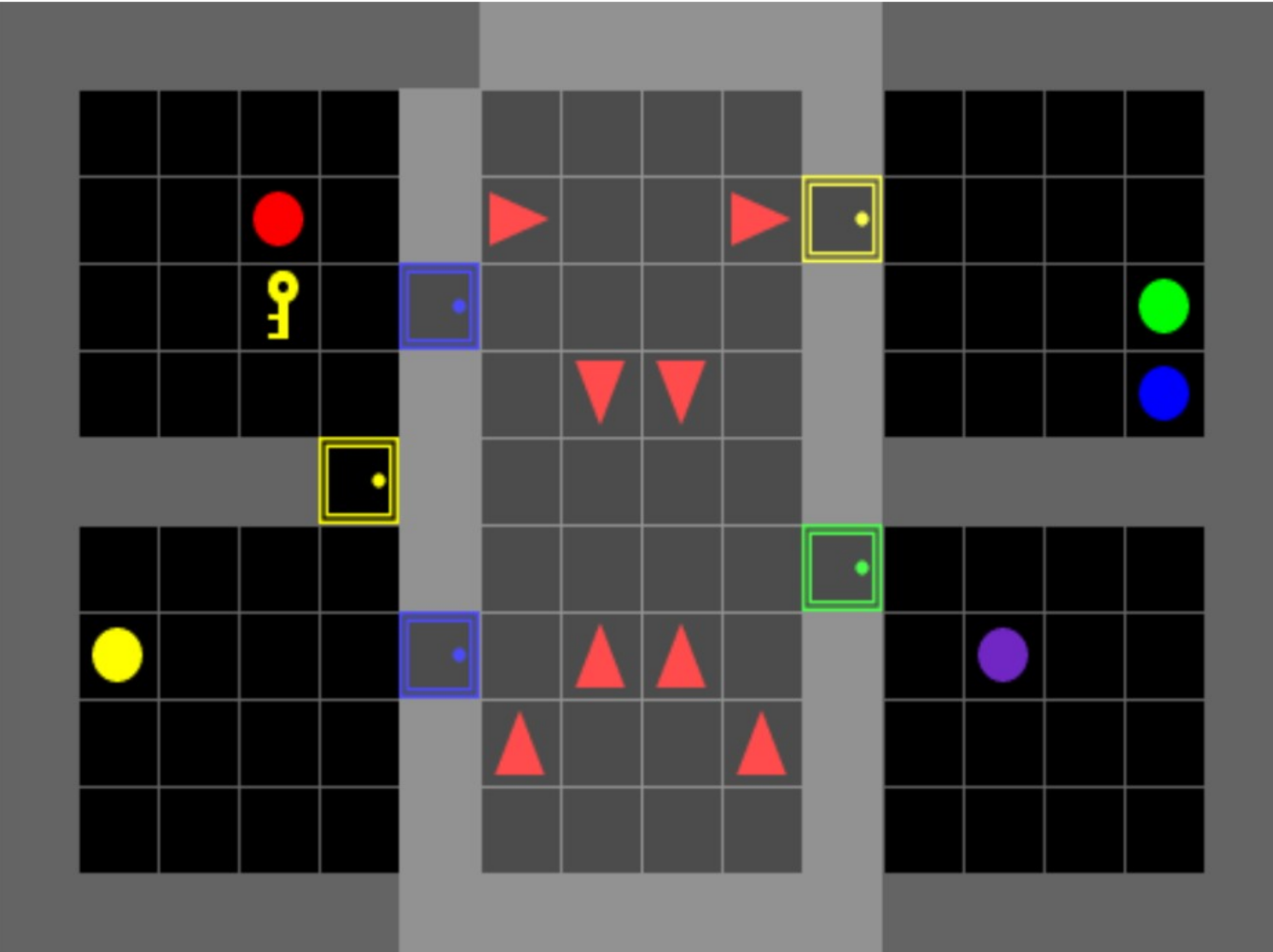}
        \caption{Warehouse Management}
        \label{fig:warehouse}
    \end{subfigure}
    \begin{subfigure}[b]{0.50\linewidth}
        \includegraphics[width=\linewidth]{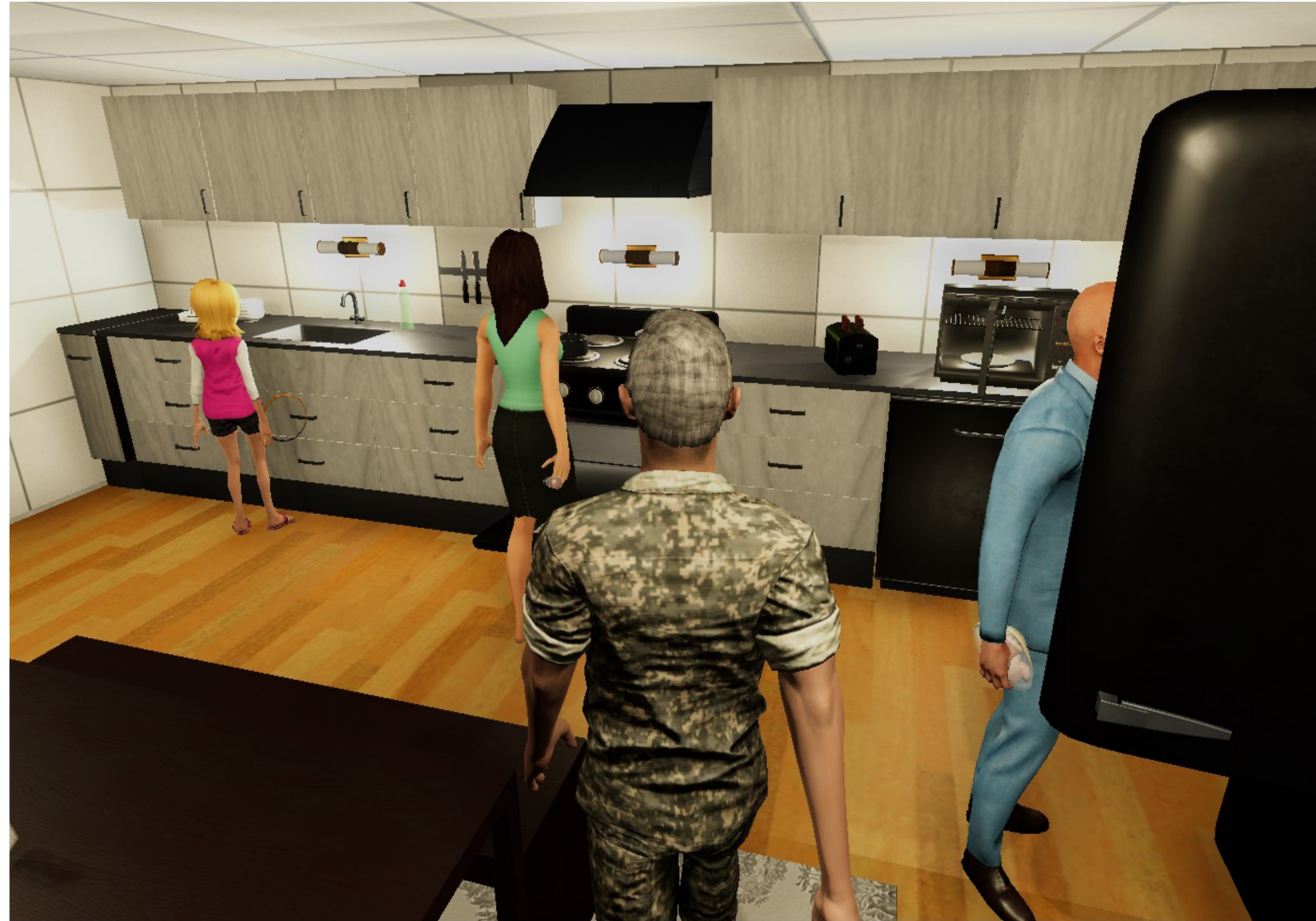}
        \caption{Home Service}
        \label{fig:homeservice}
    \end{subfigure}
    \caption{Simulation Scenario Overview}
    \label{fig:scenario}
\end{figure}

\begin{table*}[h]
	\centering
	\caption{Action predicates for different scenarios}
	\label{table:action_predicates}
	\begin{tabular}{>{\bfseries}c c}
		\toprule
		Scenarios & \textbf{Included Action Predicates} \\
		\midrule
		\multirow{2}{*}{\textbf{Warehouse Mangement}} & \constant{GoToInRoom}, \constant{GoBtwRoom}, \constant{PickUp},\\
		 & \constant{PutInRoom}, \constant{PutNearInRoom}, \constant{Toggle} \\
  
		\midrule
  
		\multirow{3}{*}{\textbf{Home Service}} & \constant{Walk},  \constant{LeftPut}, \constant{LeftPutIn}, \\
  
             & \constant{LeftGrab}, \constant{RightGrab} , \constant{Open}, \constant{Close} \\
             
		& \constant{RightPut}, \constant{RightPutIn},  \constant{SwitchOn},\constant{SwitchOff} \\
		
		\bottomrule
	\end{tabular}
\end{table*}

\subsection{Goal Distribution in Datasets}
In the \textit{Effectiveness of Task-Specific Subtree Pre-Planning} experiments, the datasets involved goals that were composed of 1 to 5 conditions. The overall distribution of these conditions across the datasets is presented in Table \ref{tab:condition_distribution}.

\begin{table}[h]
\centering
\small
\caption{Overall distribution of conditions}
\label{tab:condition_distribution}
\begin{tabular}{lc}
\toprule
\textbf{Condition} & \textbf{Overall Percentage} \\
\midrule
IsOpen/Close(obj) & 11.9\% \\
IsSwitchOn/Off(obj) & 19.5\% \\
In(obj,container) & 27.0\% \\
On(obj,surface) & 41.6\% \\
\bottomrule
\end{tabular}
\end{table}

\subsubsection{Action Allocation Process Based on Homogeneity 
($\boldsymbol{\alpha}$) Parameter}
Let $\alpha$ represent the Homogeneity parameter, where $0 \leq \alpha \leq 1$. Assume there are $n$ robots, and let $\mathcal{A}$ denote the set of actions, with each action $a \in \mathcal{A}$. The cardinality of $\mathcal{A}$ is denoted by $|\mathcal{A}| = m$.

The action allocation process is described as follows:

\begin{enumerate}
    \item Initially, each action $a_i \in \mathcal{A}$ is randomly assigned to a unique robot $r_j \in \{r_1, r_2, \dots, r_n\}$, ensuring that each action is allocated to exactly one robot.
    \item The number of additional robots to which each action will be redundantly assigned is determined by $k = \text{int}(m \times \alpha)$.
    \item Each action $a_i$ is then randomly reallocated to $k$ additional robots, ensuring that these robots do not include the one initially assigned to the action.
\end{enumerate}

\subsection{LLM}

\subsubsection{Prompt}


The objective of this study is to employ the GPT-4 model to generate goal-oriented actions for each robot, utilizing detailed task information to support the pre-planning of subtrees. The input prompt is systematically structured in JSON format, encompassing condition predicates, action predicates, and comprehensive task details, including goals, initial states, objects, and the actions available to each robot. The model is programmed to produce a list of dictionaries, also in JSON format, where each dictionary corresponds to a specific robot and contains multiple sets of goal-related actions derived from the model’s inference. This study takes advantage of GPT-4's capability to handle JSON-structured inputs and outputs, allowing for the definition of a precise JSON schema that ensures strict adherence to the predefined format. This method enhances consistency and predictability, while also streamlining subsequent data processing. For a detailed example of the input prompt, refer to \ref{my_prompt}.

\tcbset{colframe=black, colback=white, arc=2mm, boxrule=0.3mm}

\begin{tcolorbox}
[breakable,title=Prompt,label=my_prompt]
\begin{flushleft}
\textbf{[Condition]}\\
IsNear\_self\_<ALL>, IsOn\_<GRABBABLE>\_<SURFACES>, IsIn\_<GRABBABLE>\_<CONTAINERS>,
IsOpen\_<CAN\_OPEN>, IsClose\_<CAN\_OPEN>,
IsSwitchedOn\_<HAS\_SWITCH>, IsSwitchedOff\_<HAS\_SWITCH>
\vspace{0.5cm}

\textbf{[Action]}\\
Walk\_<ALL>, RightGrab\_<GRABBABLE>, LeftGrab\_<GRABBABLE>,
RightPut\_<GRABBABLE>\_<SURFACES>, LeftPut\_<GRABBABLE>\_<SURFACES>,
RightPutIn\_<GRABBABLE>\_<CONTAINERS>, LeftPutIn\_<GRABBABLE>\_<CONTAINERS>,
RightGrabFrom\_<GRABBABLE>\_<CONTAINERS>, LeftGrabFrom\_<GRABBABLE>\_<CONTAINERS>,
Open\_<CAN\_OPEN>, Close\_<CAN\_OPEN>, SwitchOn\_<HAS\_SWITCH>, SwitchOff\_<HAS\_SWITCH>
\vspace{0.5cm}

\textbf{[Example]}\\
\textbf{[Task Information]}\\
\textbf{"goal"}: ["IsOn(mug,nightstand)", "IsSwitchedOn(tablelamp)", "IsOpen(book)"],\\
\textbf{"init\_state"}: ["IsSwitchedOff(tablelamp)", "IsClose(book)", "IsClose(nightstand)"],\\
\textbf{"objects"}: ["mug", "nightstand", "tablelamp", "book", "remotecontrol", "drawer"],\\
\textbf{"action\_space"}: [\\
  {["Walk", "SwitchOn", "Open", "Close"]},\\ 
  {["Walk", "RightGrab", "RightPut", "SwitchOn", "RightPutIn"]},\\
  {["Walk", "RightGrab", "RightPut", "SwitchOn", "Open", "Close", "RightPutIn"]}
]

\textbf{[Output]}\\
\textbf{"multi\_robot\_subtree\_ls"}: [\{\\
    "WalkToSwitchOntablelamp": ["Walk(self,tablelamp)", "SwitchOn(self,tablelamp)"],\\
    "WalkToOpenBook": ["Walk(self,book)", "Open(self,book)"],\\
    "WalkToOpenNightstand": ["Walk(self,nightstand)", "Open(self,nightstand)"]\\
  \},\\
  \{\\
    "WalkToPutMugOnNightstand": [ "Walk(self,mug)", "RightGrab(self,mug)", "Walk(self,nightstand)", "RightPut(self,mug,nightstand)" ],\\
    "WalkToSwitchOntablelamp": ["Walk(self,tablelamp)", "SwitchOn(self,tablelamp)"],\\
    "WalkToPutRemoteInNightstand": ["Walk(self,remotecontrol)", "RightGrab(self,remotecontrol)", "Walk(self,nightstand)", "RightPutIn(self,remotecontrol,nightstand)"]\\
  \},\\
  \{\\
    "WalkToPutMugOnNightstand": [ "Walk(self,mug)", "RightGrab(self,mug)", "Walk(self,nightstand)", "RightPut(self,mug,nightstand)" ],\\
    "WalkToSwitchOntablelamp": [ "Walk(self,tablelamp)", "SwitchOn(self,tablelamp)"],\\
    "WalkToOpenBook": ["Walk(self,book)", "Open(self,book)" ],\\
    "WalkToPutRemoteInNightstand": ["Walk(self,remotecontrol)", "RightGrab(self,remotecontrol)", "Walk(self,nightstand)", "RightPutIn(self,remotecontrol,nightstand)"],\\
    "WalkToOpenNightstand": [ "Walk(self,nightstand)", "Open(self,nightstand)"] \}]
\vspace{0.5cm}

\textbf{[System]}\\
	\begin{enumerate}[leftmargin=*,topsep=0pt, partopsep=0pt, parsep=0pt, itemsep=1pt]
	 \item For each task, generate all possible composite actions for each robot based on its goals, initial state, and available action space. Repetition of composite actions is permissible. 

	 \item \textbf{[multi\_robot\_subtree\_ls]} is a list where each entry is a dictionary [subtree\_dict] containing all task-related composite actions for a robot. Keys in [subtree\_dict] are composite action names, and values are sequences of atomic actions, ordered such that each action’s effect serves as the precondition for the next. Using the current [Task Information] and [Example], construct \textbf{[multi\_robot\_subtree\_ls]} for each robot.
	 \item The length of \textbf{[multi\_subtree\_list]} equals the number of robots and corresponds to the number of action lists in [action\_space]. With \textbf{\{num\_agent\}} robots, \textbf{[multi\_subtree\_list]} contains \textbf{\{num\_agent\}} dictionaries, each with 1-5 key-value pairs.
	\end{enumerate}

\end{flushleft}
\end{tcolorbox}

\subsubsection{Automatic Reflective Feedback}

 To validate the outputs generated by the LLM, we developed an automated checker that provides feedback in cases of grammatical errors, action sequences that cannot be pre-planned into a subtree, or an insufficient number of action sequences generated for each robot. \\
A specific example is provided in \ref{box:feedback}, where two robots and a key are in Room 0, with the key able to unlock the door to Room 1. The goal is to move a ball from Room 0 to Room 1. The LLM correctly inferred two action sequences: \constant{UseKeyToOpenDoor} and \constant{MoveBallBetweenRooms}. However, the output contained grammatical errors, including incorrect objects, action predicates, and missing action parameters.
Additionally, the \constant{action\_space} list indicated two robots, but the LLM only provided an action sequence dictionary for one robot. Our automated checker detected these issues, generating natural language feedback for the LLM to refine its response. Upon revision, the LLM corrected the grammatical errors and produced two dictionaries representing the \constant{action\_space} for both robots. Subsequently, the pre-planning algorithm attempted to generate subtrees. However, it failed when expanding the \constant{UseKeyToOpenDoor} subtree, as \constant{Toggle(self,door-0)} could not be extended after \constant{PickUp(self,ball-0)}. The failure occurred because the add effects of \constant{Toggle(self,door-0)} (\constant{IsOpen(door-0)}) do not satisfy the preconditions of \constant{PickUp(self,ball-0)} (\constant{IsHandEmpty(self), IsIn(ball-0,room-0), IsIn(self,room-0)}). The automated checker identified this mismatch and applied the reflective feedback template, which was then provided to the LLM for further refinement. Ultimately, the LLM successfully inferred a valid sequence.

\begin{tcolorbox}
[breakable,title=An Example of LLM Process with Automatic Reflective Feedback,label=my_feedback,breakable]
\begin{flushleft}
\textbf{[Task Information]}\\
\textbf{"goal"}: ["IsInRoom(ball-0,room-1)"],\\
\textbf{"init\_state"}: ["IsInRoom(ball-0,room-0)", "IsInRoom(key-0,room-0)","IsClose(door-0)"],"IsInRoom(robot-0,room-0)", "IsInRoom(robot-1,room-0)","IsInRoom(robot-1,room-0)","IsHandEmpty(robot-0)","IsHandEmpty(robot-1)"]\\
\textbf{"objects"}: ["key-0", "ball-0", "door-0", "room-0", "room-1"],\\
\textbf{"action\_space"}: [\\
  {["GoToInRoom", "PickUp", "PutInRoom", "Toggle"]},\\ 
  {["GoToInRoom", "GoBtwRoom", "PickUp", "PutInRoom"]}]\\
\vspace{0.5cm}
\textbf{[LLM]}\\
\textbf{"multi\_robot\_subtree\_ls"}:[\\
    "UseKeyToOpenDoor": ["GoToInRoom(self,key-0,room-0)", "PickUp\underline{(key-0)}", "GoToInRoom(self,door-0,room-0)", "\underline{Open}(self,door-0)"],\\

     "MoveBallBetweenRooms": ["GoToInRoom(self,ball-0,room-0)", "PickUp(self,\underline{ball-1})", "GoBtwRoom(self,room-0,room-1)","PutInRoom(self,ball-0,room-1)"]\\

  \}]

\vspace{0.2cm}
\tikz{\draw[dashed,line width=1pt] (0,0) -- (\linewidth,0);}\\ 
\vspace{0.2cm}
\textbf{Reflective Feedback:}\newline
1. The object \textbf{["ball-1"]} does not exist and must be an element of the set \textbf{[objects]}.\\
2. The action predicates \textbf{["PickUp(key-0)","Open"]} either do not exist or are incorrectly formatted. They must be part of the set \textbf{[action\_space]}.\\
3. The number of robots involved in this task is \textbf{\{num\_agent\}}, which implies that the \textbf{[multi\_subtree\_list]} should contain \textbf{\{num\_agent\}} dictionaries. Each of these dictionaries should have 1-4 key-value pairs. Please revise accordingly.\\

\vspace{0.5cm}

\textbf{[LLM]}\\
\textbf{"multi\_robot\_subtree\_ls"}: [\{\\
    "UseKeyToOpenDoor": ["GoToInRoom(self,key-0,room-0)", "PickUp(self,key-0)", "GoToInRoom(self,door-0,room-0)", "Toggle(self,door-0)", "\underline{PickUp(self,ball-0)}"]\\
  \},\\
  \{\\
    "MoveBallBetweenRooms": ["GoToInRoom(self,ball-0,room-0)", "PickUp(self,ball-0)", "GoBtwRoom(self,room-0,room-1)", "PutInRoom(self,ball-0,room-1)"]\\
  \}]

\vspace{0.2cm}
\tikz{\draw[dashed,line width=1pt] (0,0) -- (\linewidth,0);}\\ 
\vspace{0.2cm}
\textbf{Reflective Feedback:}\newline

1.\textbf{"UseKeyToOpenDoor"} cannot be pre-planned. \\ 
2.\textbf{"Toggle(self,door-0)"} and \textbf{"PickUp(self,ball-0)"} cannot be pre-planned together.\\
3.[Toggle(self,door-0).add] = ("IsOpen(door-0)")\\
4.[PickUp(self,ball-0).pre] = ("IsHandEmpty(self)", "IsIn(ball-0,room-0)", "IsIn(self,room-0)")\\
5.[Toggle(self,door-0).add] condition does not belong to the [PickUp(self,ball-0).pre] conditions.

\vspace{0.5cm}

\textbf{[LLM]}\\
\textbf{"multi\_robot\_subtree\_ls"}: [\{\\
    "UseKeyToOpenDoor": ["GoToInRoom(self,key-0,room-0)", "PickUp(self,key-0)", "GoToInRoom(self,door-0,room-0)", "Toggle(self,door-0)"]\\
  \},\\
  \{\\
    "MoveBallBetweenRooms": ["GoToInRoom(self,ball-0,room-0)", "PickUp(self,ball-0)", "GoBtwRoom(self,room-0,room-1)", "PutInRoom(self,ball-0,room-1)"]\\
  \}]

\end{flushleft}
\label{box:feedback}
\end{tcolorbox}

\end{document}